\documentclass{article}
\usepackage{iclr2027_conference,times}
\usepackage{amsmath,amssymb,amsthm,mathtools,bm}
\usepackage{booktabs,multirow,array}
\usepackage{graphicx}
\usepackage{algorithm}
\usepackage{algpseudocode}
\usepackage{xcolor}
\usepackage{microtype}
\usepackage{enumitem}
\usepackage{url}
\usepackage{hyperref}
\usepackage[nameinlink,capitalise,noabbrev]{cleveref}
\hypersetup{colorlinks=true,citecolor=blue!55!black,linkcolor=blue!55!black,urlcolor=blue!55!black}
\allowdisplaybreaks

\newtheorem{theorem}{Theorem}[section]
\newtheorem{proposition}[theorem]{Proposition}
\newtheorem{lemma}[theorem]{Lemma}
\newtheorem{corollary}[theorem]{Corollary}

\theoremstyle{definition}

\newcommand{\E}{\mathbb{E}}
\newcommand{\Pp}{\mathbb{P}}
\newcommand{\R}{\mathbb{R}}
\newcommand{\F}{\mathrm{F}}
\newcommand{\op}{\mathrm{op}}

\newcommand{\rank}{\operatorname{rank}}

\newcommand{\argmin}{\operatorname*{arg\,min}}
\newcommand{\argmax}{\operatorname*{arg\,max}}
\newcommand{\cI}{\mathcal{I}}
\newcommand{\cG}{\mathcal{G}}
\newcommand{\cT}{\mathcal{T}}

\newcommand{\DeltaCU}{\Delta_{\mathrm{CU}}}
\newcommand{\widehatDeltaCU}{\widehat{\Delta}_{\mathrm{CU}}}
\newcommand{\ECU}{E_{\mathrm{CU}}}
\newcommand{\inner}[2]{\left\langle #1,#2\right\rangle}
\newcommand{\norm}[2][]{\left\lVert #2\right\rVert_{#1}}
\newcommand{\abs}[1]{\left|#1\right|}

\newcommand{\LRD}{\textsc{LR-D2}}
\newcommand{\NOTD}{\textsc{Seeded-LR-D2}}
\newcommand{\RefineD}{\textsc{Refine-LR-D2}}
\newcommand{\epsr}{\varepsilon_r}

\title{High-Dimensional Nonparametric Change-Point Detection\\via Low-Rank Degree-Two Density Projection}
\author{Guoqing Zhang\\
Operations Research Program\\
North Carolina State University\\
Raleigh, NC, USA\\
{\normalfont\texttt{Gzhang25@ncsu.edu}}
\And
Zhaixin Chen\\
H. Milton Stewart School of ISyE\\
Georgia Institute of Technology\\
Atlanta, GA, USA\\
{\normalfont\texttt{zxchen08@gatech.edu}}}
\hypersetup{pdftitle={High-Dimensional Nonparametric Change-Point Detection via Low-Rank Degree-Two Density Projection},pdfauthor={Guoqing Zhang and Zhaixin Chen}}
\iclrfinalcopy
\begin{document}
\maketitle
\fancyhead[L]{Author draft for ICLR 2027}
\begin{abstract}
Detecting distributional changes in high dimension is difficult when neither the pre-change nor post-change density is parametrically specified.  We introduce a representation-based approach that retains all degree-at-most-two density information while replacing density estimation by matrix mean estimation.  For observations in $[-1,1]^d$, a symmetric feature matrix $H_2(X)\in\R^{(d+1)\times(d+1)}$ is constructed so that $M(f)=\E_f H_2(X)$ is an isometric encoding of the degree-two orthogonal projection of the density.  We scan matrix CUSUMs after rank-$r$ truncation, exploiting the low rank of the projected jump rather than sparsity of individual coordinates.  The resulting \LRD{} estimator has a tent-shaped population objective and a nonasymptotic operator-norm analysis whose leading stochastic term scales as $\sqrt{rd\log(nd)}$.  For multiple changes, we give a seeded narrowest-over-threshold procedure and prove exact recovery by an induction that preserves an isolating interval for every undetected change.  A cross-fitted scalar refinement learns the changing low-rank direction on one fold and localizes on the other, attaining $\widetilde O_{\Pp}(\kappa^{-2})$ error; a matching Le Cam lower bound shows optimality up to logarithms.  A geometrically $\beta$-mixing extension follows from a dependent matrix Bernstein inequality.  Experiments with ambient dimension up to $200$, a three-change $d=100$ sequence, and a $128$-feature human-activity benchmark show that the method remains computationally practical and accurately detects pure dependence changes that are invisible to mean CUSUMs.
\end{abstract}

\section{Introduction}

A change point is a time at which the law generating a sequence changes.  In modern applications, the observation at each time can be a high-dimensional vector and the change need not be a mean shift: dependence, marginal shape, or interactions may change while every coordinate mean remains fixed.  Classical multivariate procedures then face two simultaneous obstacles.  First, direct nonparametric density estimation is subject to the curse of dimensionality.  Second, an unrestricted second-order change contains $\Theta(d^2)$ coordinates, so a Frobenius scan can aggregate overwhelming noise even when the scientifically meaningful change is low-dimensional.

We address these obstacles through a fixed representation of the density.  Relative to the uniform measure on $[-1,1]^d$, the degree-two projection contains coordinate means, marginal quadratic components, and pairwise interactions.  With an appropriate normalization, all these coefficients form a symmetric $(d+1)\times(d+1)$ matrix $M(f)$.  Crucially, there is a single-observation feature map $H_2$ satisfying
\[
M(f)=\E_f H_2(X),\qquad
\norm[\F]{M(f)-M(g)}=\norm[L^2]{P_2(f-g)}.
\]
Thus a nonparametric change visible to the degree-two projection becomes a matrix-mean change, without estimating either density.  When the projected jump matrix is low rank, rank truncation denoises the matrix CUSUM while preserving its population signal.

This viewpoint differs from coordinate sparsity.  A dense correlation pattern generated by a few latent interaction directions may have many nonzero entries but small rank.  It also provides a concrete representation-learning interpretation: $H_2$ is a deterministic second-order feature map, and the data select the changing subspace through a local eigendecomposition.  The matrix dimension grows only linearly with $d$, and the retained rank controls the effective stochastic complexity.

\paragraph{Contributions.}
Our main contributions are fourfold.
\begin{enumerate}[leftmargin=1.35em,itemsep=1pt,topsep=2pt]
\item We derive an exact isometric matrix encoding of the degree-two density projection and show that, on any interval containing one change, every population matrix CUSUM is a positive scalar multiple of the same jump matrix.  The resulting squared population score has an exact linear localization margin.
\item We prove a deterministic low-rank perturbation inequality and combine it with matrix Bernstein concentration.  Under a bounded-density condition, the leading normalized CUSUM noise is $O_{\Pp}(\sqrt{d\log(nd)})$ in operator norm, yielding the signal requirement $\Delta\kappa^2\gtrsim rd\log(nd)$ up to constants and lower-order terms.
\item We propose \NOTD{}, a multiscale interval-deletion algorithm.  Its multiple-change theorem is proved by induction: no-change intervals are inactive, every remaining change has an active balanced isolating interval, and the shortest active interval contains exactly one change.  We then introduce \RefineD{}, a cross-fitted direction-and-localization step with near-minimax $\kappa^{-2}$ error.
\item We provide reproducible synthetic and semi-synthetic experiments.  The estimator is run at $d\in\{20,50,100,200\}$, detects three changes in a $d=100$ sequence, and processes a $128$-feature UCI human-activity task.  These settings directly test the claimed high-dimensional operating regime.
\end{enumerate}

\paragraph{Related work.}
High-dimensional mean-change methods exploit sparse projections or sparsified segmentation \citep{wang2018sparse,cho2015sparsified,enikeeva2019high}.  Nonparametric multivariate and functional change-point methods include kernel, graph, and local-neighborhood constructions \citep{harchaoui2007retrospective,gretton2012kernel,madrid2022optimal,madrid2022functional,aue2018detecting}.  Our method is complementary: it is nonparametric at the density level but targets changes identifiable through a prescribed polynomial projection, with low matrix rank as the structural regularizer.  For multiple changes we use ideas from wild, narrowest-over-threshold, and seeded binary segmentation \citep{fryzlewicz2014wild,baranowski2019narrowest,kovacs2023seeded}.  The two-stage localization architecture and induction-based consistency argument are also related to recent functional-regression change-point analysis \citep{kumar2024functional}.  Matrix concentration is based on \citet{tropp2012user}; the dependent extension uses \citet{banna2016bernstein}.

\section{Degree-two density representation}
\label{sec:representation}

\subsection{An isometric feature matrix}
Let $\mu=\operatorname{Unif}([-1,1]^d)$ and let $f=dP/d\mu$.  Define the first Legendre polynomials
\[
p_0(x)=1,\qquad p_1(x)=x,\qquad p_2(x)=\frac{3x^2-1}{2},
\]
whose squared $L^2(\mu)$ norms are $1$, $1/3$, and $1/5$.  The degree-at-most-two projection is
\begin{equation}
P_2f(x)=1+\sum_{j=1}^d a_jx_j+\sum_{j=1}^d b_jp_2(x_j)
+\sum_{1\le i<j\le d}\gamma_{ij}x_ix_j,
\label{eq:p2-main}
\end{equation}
where $a_j=3\E_fX_j$, $b_j=5\E_fp_2(X_j)$, and $\gamma_{ij}=9\E_f(X_iX_j)$.

For $x\in[-1,1]^d$, define the symmetric matrix $H_2(x)\in\R^{(d+1)\times(d+1)}$ by
\begin{align}
[H_2(x)]_{00}&=1,\nonumber\\[-1.0em]
[H_2(x)]_{0j}=[H_2(x)]_{j0}&=\frac{3x_j}{\sqrt6},\qquad
[H_2(x)]_{jj}=\sqrt5\,p_2(x_j),\nonumber\\[-0.6em]
[H_2(x)]_{ij}=[H_2(x)]_{ji}&=\frac{3x_ix_j}{\sqrt2},\qquad i<j.
\label{eq:h2-main}
\end{align}
Set $M(f)=\E_fH_2(X)$.  The normalization accounts for the double occurrence of off-diagonal entries in the Frobenius norm.

\begin{proposition}[Isometric representation]
\label{prop:isometry-main}
For any densities $f,g$ with respect to $\mu$,
\[
\norm[\F]{M(f)-M(g)}=\norm[L^2(\mu)]{P_2(f-g)}.
\]
Moreover, $\norm[\op]{H_2(x)}\le3d$ for all $x$, and
$\E_{\mu}[H_2(X)^2]=(1+d/2)I_{d+1}$.
\end{proposition}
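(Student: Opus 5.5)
The plan is to read every entry of $H_2(x)$ as a scaled member of the $L^2(\mu)$-orthonormal system
\[
\mathcal B=\bigl\{1,\ \sqrt3\,x_j,\ \sqrt5\,p_2(x_j),\ 3x_ix_j\ (i<j)\bigr\}.
\]
Orthonormality holds because $\mu$ is a product measure and the one-dimensional Legendre polynomials have squared norms $1,1/3,1/5$. By \eqref{eq:h2-main}, $[H_2]_{00}=1$ and $[H_2]_{jj}=\sqrt5\,p_2(x_j)$ are themselves elements of $\mathcal B$. The off-diagonal entries are
\[
[H_2]_{0j}=\tfrac1{\sqrt2}\,(\sqrt3\,x_j),\qquad [H_2]_{ij}=\tfrac1{\sqrt2}\,(3x_ix_j).
\]
So every off-diagonal entry is $1/\sqrt2$ times a basis element, and each basis element other than $1$ and $\sqrt5 p_2(x_j)$ occurs in exactly two symmetric positions. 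All three claims follow from this bookkeeping.

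\emph{Isometry.} Since $\mathcal B$ spans the degree-$\le2$ space in \eqref{eq:p2-main}, Parseval gives
\[
\|P_2(f-g)\|_{L^2(\mu)}^2=\sum_{\phi\in\mathcal B}\bigl(\E_f\phi-\E_g\phi\bigr)^2 .
\]
The constant term contributes zero because both $f$ and $g$ integrate to one, and this matches $[M(f)-M(g)]_{00}=0$. In $\|M(f)-M(g)\|_{\F}^2$:
\begin{itemize}[leftmargin=1.35em,itemsep=1pt,topsep=2pt]
\item each diagonal entry $j\ge1$ contributes $(\E_f-\E_g)[\sqrt5 p_2(X_j)]^2$;
\item each pair of symmetric off-diagonal entries contributes $2\cdot\tfrac12(\E_f\phi-\E_g\phi)^2$ for the corresponding $\phi\in\mathcal B$.
\end{itemize}
The two sums therefore agree term by term. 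This is the step where the $\sqrt6$ and $\sqrt2$ normalizations are checked.

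\emph{Operator norm.} I would bound $\|H_2(x)\|_{\op}\le\|H_2(x)\|_{\F}$ rather than use row sums. For $d=1$ a Gershgorin bound gives roughly $3.46>3$, so it is too crude; the Frobenius route is where I expect the only delicacy. Using $|x_j|\le1$ and $|p_2(x_j)|\le1$ on $[-1,1]$,
\[
\|H_2(x)\|_{\F}^2\le 1+2d\cdot\tfrac96+5d+2\binom d2\tfrac92=\tfrac92d^2+\tfrac72d+1 .
\]
This is at most $9d^2$ because $\tfrac92d^2-\tfrac72d-1=(d-1)(\tfrac92d+1)\ge0$ for $d\ge1$.

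\emph{Second moment.} I would write $\bigl(\E_\mu[H_2^2]\bigr)_{kl}=\sum_m\E_\mu\bigl[[H_2]_{km}[H_2]_{ml}\bigr]$ and treat diagonal and off-diagonal entries separately.
\begin{itemize}[leftmargin=1.35em,itemsep=1pt,topsep=2pt]
\item For $k\ne l$, the index sets $\{k,m\}$ and $\{m,l\}$ differ. Hence $[H_2]_{km}$ and $[H_2]_{ml}$ are multiples of distinct elements of $\mathcal B$, and the expectation vanishes by orthogonality.
\item For $k=l=0$, the diagonal entry is $1+d\cdot\tfrac12$.
\item For $k=l=j\ge1$, it is $\tfrac12+1+(d-1)\tfrac12=1+\tfrac d2$: one half from the $0j$ entry, one from $\sqrt5p_2$, and one half from each of the $d-1$ interaction entries.
\end{itemize}
This gives $\E_\mu[H_2(X)^2]=(1+d/2)I_{d+1}$.
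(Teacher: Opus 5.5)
Your proposal is correct and follows essentially the same route as the paper. For the isometry you use Parseval in the orthogonal Legendre basis, pre-normalizing the basis so that each off-diagonal entry of $H_2$ is $1/\sqrt2$ times a basis element; the paper instead tracks the coefficients $a_j,b_j,\gamma_{ij}$ against the norms $1/3,1/5,1/9$. You use the same Frobenius count for $\|H_2(x)\|_{\op}\le 3d$, and an entrywise computation of $\E_\mu[H_2(X)^2]$. The only minor variation is that you make the off-diagonal entries of the second moment vanish by orthogonality of the distinct basis elements indexed by $\{k,m\}\neq\{m,l\}$, whereas the paper uses an odd-power symmetry argument. Both are valid, and your factorization $(d-1)(\tfrac92 d+1)\ge 0$ makes explicit a final inequality that the paper only asserts.
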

The proof is a direct orthogonality calculation and is given in \cref{app:representation}.  The first identity is the core reduction: estimating $P_2f$ is equivalent to estimating a matrix mean.  It also makes the limitation explicit.  If $P_2(f-g)=0$, no method using only degree-two features can distinguish $f$ and $g$ at the population level.

\subsection{Piecewise distributions and projected jumps}
Let $X_1,\ldots,X_n\in[-1,1]^d$ be independent for the main theory.  There are change points
\[
0=\eta_0<\eta_1<\cdots<\eta_K<\eta_{K+1}=n
\]
and segment densities $f_0,\ldots,f_K$ such that $X_i\sim f_k$ for $\eta_k<i\le\eta_{k+1}$.  Write
\[
M_k=M(f_k),\qquad D_k=M_k-M_{k-1},\qquad
\kappa_k=\norm[\F]{D_k}.
\]
We assume $\rank(D_k)\le r$ and define
\[
\kappa=\min_k\kappa_k,\qquad
\Delta=\min_{1\le k\le K+1}(\eta_k-\eta_{k-1}).
\]
The rank assumption concerns the change, not the segment matrices themselves.  For example, changing the dependence of one coordinate pair produces a rank-two symmetric jump even when $d$ is large.

\section{Low-rank matrix CUSUM}
\label{sec:cusum}

On an interval $(s,e]$ and candidate split $s<t<e$, define
\begin{align}
\widehat M_L^{s,e}(t)&=\frac1{t-s}\sum_{i=s+1}^tH_2(X_i),&
\widehat M_R^{s,e}(t)&=\frac1{e-t}\sum_{i=t+1}^eH_2(X_i),\nonumber\\[-0.5em]
\widehatDeltaCU^{s,e}(t)&=
\sqrt{\frac{(t-s)(e-t)}{e-s}}
\left\{\widehat M_R^{s,e}(t)-\widehat M_L^{s,e}(t)\right\}.
\label{eq:matrix-cusum-main}
\end{align}
Let $A_{(r)}$ denote a best rank-$r$ approximation of $A$, obtained by retaining the $r$ eigencomponents with largest absolute eigenvalues for symmetric $A$.  Our scan score is
\begin{equation}
\widehat W_r^{s,e}(t)=\norm[\F]{\bigl(\widehatDeltaCU^{s,e}(t)\bigr)_{(r)}}.
\label{eq:lr-score-main}
\end{equation}

Suppose $(s,e]$ contains one change $b$ from $M_0$ to $M_1$ and write $D=M_1-M_0$, $\kappa=\norm[\F]{D}$.  The population CUSUM has the exact form
\begin{equation}
\DeltaCU^{s,e}(t)=\E\widehatDeltaCU^{s,e}(t)=a_b^{s,e}(t)D,
\label{eq:population-scalar-main}
\end{equation}
where
\begin{equation}
a_b^{s,e}(t)=
\begin{cases}
\dfrac{e-b}{\sqrt{e-s}}\sqrt{\dfrac{t-s}{e-t}},&t<b,\\[1.1ex]
\sqrt{\dfrac{(b-s)(e-b)}{e-s}},&t=b,\\[1.1ex]
\dfrac{b-s}{\sqrt{e-s}}\sqrt{\dfrac{e-t}{t-s}},&t>b.
\end{cases}
\label{eq:a-main}
\end{equation}
Hence all candidate population matrices share the same singular vectors and rank.  The coefficient increases to $b$ and decreases afterward.  More strongly,
\begin{equation}
\norm[\F]{\DeltaCU^{s,e}(b)}^2-\norm[\F]{\DeltaCU^{s,e}(t)}^2
=
\begin{cases}
\dfrac{(e-b)(b-t)}{e-t}\kappa^2,&t<b,\\[1.0ex]
\dfrac{(b-s)(t-b)}{t-s}\kappa^2,&t>b.
\end{cases}
\label{eq:exact-margin-main}
\end{equation}
This exact linear margin drives localization.  \Cref{fig:score-curve} illustrates that the rank-two empirical score follows the population tent in a $d=100$ dependence-change example.

\begin{figure}[t]
\centering
\includegraphics[width=0.96\linewidth]{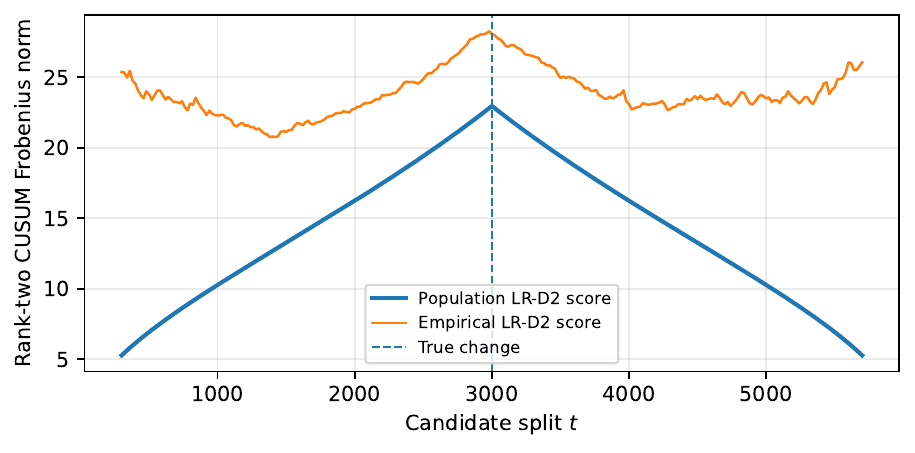}
\caption{Population and empirical rank-two degree-two CUSUM scores for one $d=100$ Gaussian-copula change.  Coordinate means and marginal distributions are unchanged; only the dependence of the first two coordinates changes.}
\label{fig:score-curve}
\end{figure}

The low-rank step is controlled by a deterministic inequality.
\begin{proposition}[Low-rank perturbation]
\label{prop:pert-main}
If $\widehat A=A+E$, then
\[
\norm[\F]{\widehat A_{(r)}-A}
\le \norm[\F]{A-A_{(r)}}+2\sqrt{2r}\norm[\op]{E}.
\]
Consequently, on a one-change interval with $\rank(D)\le r$,
\[
\norm[\F]{\bigl(\widehatDeltaCU^{s,e}(t)\bigr)_{(r)}-a_b^{s,e}(t)D}
\le c_r\sqrt r\norm[\op]{\ECU^{s,e}(t)},\qquad c_r=2+\sqrt2,
\]
where $\ECU=\widehatDeltaCU-\DeltaCU$.
\end{proposition}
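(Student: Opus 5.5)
The plan is to reduce everything to the rank-$\le 2r$ matrix $\widehat A_{(r)}-A_{(r)}$ and to control it in operator norm through Weyl's inequality.

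\emph{Reduction.} By the triangle inequality,
\[
\norm[\F]{\widehat A_{(r)}-A}\le \norm[\F]{A-A_{(r)}}+\norm[\F]{\widehat A_{(r)}-A_{(r)}},
\]
so it suffices to show $\norm[\F]{\widehat A_{(r)}-A_{(r)}}\le 2\sqrt{2r}\norm[\op]{E}$. Since $\rank(\widehat A_{(r)}-A_{(r)})\le 2r$, we have $\norm[\F]{\widehat A_{(r)}-A_{(r)}}\le\sqrt{2r}\,\norm[\op]{\widehat A_{(r)}-A_{(r)}}$. It therefore remains to bound the operator norm by $2\norm[\op]{E}$.

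\emph{Operator-norm step.} The direct chain is
\[
\norm[\op]{\widehat A_{(r)}-A_{(r)}}\le \sigma_{r+1}(\widehat A)+\norm[\op]{E}+\sigma_{r+1}(A).
\]
Weyl's inequality gives $\sigma_{r+1}(\widehat A)\le\sigma_{r+1}(A)+\norm[\op]{E}$, so the chain yields
\[
\norm[\op]{\widehat A_{(r)}-A_{(r)}}\le 2\norm[\op]{E}+2\sigma_{r+1}(A).
\]
The residual $2\sigma_{r+1}(A)$ is the main obstacle in the general case: multiplied by $\sqrt{2r}$, it would exceed the single $\norm[\F]{A-A_{(r)}}$ allowed in the statement. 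To avoid it, I would not pass through $A_{(r)}$ in operator norm. Instead I would let $Q$ be the orthogonal projector onto the span of the eigenvectors retained in $\widehat A_{(r)}$ together with those retained in $A_{(r)}$; this span has dimension at most $2r$. I would then split $\widehat A_{(r)}-A$ into its $Q$-compressed part and its complement, aiming for two bounds:
\begin{itemize}[leftmargin=1.35em,itemsep=1pt,topsep=2pt]
\item the complement part should be at most $\norm[\F]{A-A_{(r)}}$, because $(I-Q)A_{(r)}=0$;
\item the compressed part $Q(\widehat A_{(r)}-A)Q=Q(\widehat A_{(r)}-\widehat A)Q+QEQ$ should cost at most $\sqrt{2r}\bigl(\norm[\op]{\widehat A-\widehat A_{(r)}}_{\text{restricted}}+\norm[\op]{E}\bigr)$.
\end{itemize}
The restricted spectral term would be handled by interlacing against $A$'s tail, so that the tail contribution is absorbed into the Frobenius term rather than multiplied by $\sqrt{2r}$. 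I expect this bookkeeping to be the delicate part, and I am not certain it closes with exactly the stated constant.

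\emph{Consequence.} This is the case actually used later, and it avoids the obstacle. Take $A=\DeltaCU^{s,e}(t)=a_b^{s,e}(t)D$ from \eqref{eq:population-scalar-main} and $E=\ECU^{s,e}(t)$. Then $\rank A\le\rank D\le r$, so $A_{(r)}=A$ and $\sigma_{r+1}(A)=0$. The chain above then gives, without any tail term,
\[
\norm[\op]{\widehat A_{(r)}-A}\le\sigma_{r+1}(\widehat A)+\norm[\op]{E}\le 2\norm[\op]{E}.
\]
Since $\rank(\widehat A_{(r)}-A)\le 2r$, this yields
\[
\norm[\F]{\widehat A_{(r)}-A}\le 2\sqrt{2r}\,\norm[\op]{E}\le(2+\sqrt2)\sqrt r\,\norm[\op]{E},
\]
using $2\sqrt2\le 2+\sqrt2$. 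This establishes the second display with $c_r=2+\sqrt2$.
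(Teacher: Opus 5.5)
\textbf{Second display: correct.} Your proof of the second display is correct and is a legitimate alternative to deriving it from the first. With $\rank(a_b^{s,e}(t)D)\le r$ you have $A_{(r)}=A$. Weyl gives $\sigma_{r+1}(\widehat A)\le\|E\|_{\op}$, so $\|\widehat A_{(r)}-A\|_{\op}\le 2\|E\|_{\op}$. The rank-$2r$ comparison then gives $2\sqrt{2r}\|E\|_{\op}\le(2+\sqrt2)\sqrt r\|E\|_{\op}$.

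\textbf{First display: not proved.} The gap is the general inequality in the first display. This is the main content of the proposition, and it is what the approximate-rank bound in \cref{app:approx-rank} relies on.

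Your opening reduction is not merely hard to close; it is false. Without an eigengap, $\widehat A_{(r)}$ and $A_{(r)}$ can retain different eigenvectors. Take $r=1$, $A=\mathrm{diag}(1,1-\epsilon)$ and $E=\mathrm{diag}(0,2\epsilon)$ with small $\epsilon>0$. Then $A_{(1)}=\mathrm{diag}(1,0)$ and $\widehat A_{(1)}=\mathrm{diag}(0,1+\epsilon)$, so $\|\widehat A_{(1)}-A_{(1)}\|_{\F}>\sqrt2$ while $2\sqrt2\|E\|_{\op}=4\sqrt2\,\epsilon$. The target inequality survives only because the eigenvector swap is paid for by $\|A-A_{(1)}\|_{\F}=1-\epsilon$ once you compare with $A$ rather than with $A_{(r)}$.

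Your projector split is set up correctly: the complement part is exactly $\|A-A_{(r)}\|_{\F}^2-\|Q(A-A_{(r)})Q\|_{\F}^2$. But the step you leave open is the whole difficulty. Interlacing plus Weyl bounds $\|Q(\widehat A-\widehat A_{(r)})Q\|_{\F}$ by the leading tail singular values of $A$ plus $\sqrt r\|E\|_{\op}$. The complement, however, only releases $\|Q(A-A_{(r)})Q\|_{\F}$, which can be much smaller, for example when the retained eigenvector of $\widehat A$ tilts into a null direction of $A$. Done this way, the tail term comes out with constant $\sqrt2$ instead of $1$.

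\textbf{The missing idea.} Use the optimality of $B=\widehat A_{(r)}$ directly, with no eigenvector perturbation at all; this is what the paper does. Since $A_{(r)}$ is a rank-$\le r$ competitor, Eckart--Young gives $\|A+E-B\|_{\F}^2\le\|A+E-A_{(r)}\|_{\F}^2$. Expanding both sides and cancelling $\|E\|_{\F}^2$ yields
\[
\|A-B\|_{\F}^2\le\|A-A_{(r)}\|_{\F}^2+2\langle E,B-A_{(r)}\rangle .
\]
Because $B-A_{(r)}$ has rank at most $2r$, the cross term satisfies $|\langle E,B-A_{(r)}\rangle|\le\|E\|_{\op}\|B-A_{(r)}\|_*\le\sqrt{2r}\,\|E\|_{\op}\bigl(\|B-A\|_{\F}+\|A-A_{(r)}\|_{\F}\bigr)$.

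Write $x=\|B-A\|_{\F}$, $a=\|A-A_{(r)}\|_{\F}$ and $u=\sqrt{2r}\|E\|_{\op}$. Then $x^2\le a^2+2u(x+a)$, i.e.\ $(x-u)^2\le(a+u)^2$, so $x\le a+2u$. No eigengap, Weyl or interlacing is needed. The rank enters only through the nuclear-to-Frobenius comparison, and the second display is the special case $a=0$.
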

The approximation-tail term for an approximately rank-$r$ jump is stated in \cref{app:approx-rank}.

\section{Multiple changes and cross-fitted refinement}
\label{sec:method}

\subsection{Seeded interval deletion}
Fix a minimum scale $m$.  For dyadic lengths $h\in\{m,2m,4m,\ldots\}$, form intervals of length $h$ whose starting points are spaced by $h/2$; denote the collection by $\cI_m$.  On $I=(s,e]$, candidates are restricted to its central half,
$\cT_I=\{t:s+h/4\le t\le e-h/4\}$.  Let
\[
\widehat b_I\in\argmax_{t\in\cT_I}\widehat W_r^{s,e}(t),\qquad
\widehat S_I=\widehat W_r^{s,e}(\widehat b_I).
\]
The interval-deletion procedure in \cref{alg:seeded} repeatedly takes the shortest interval above threshold and removes that whole interval from further recursion.  Removing an interval rather than splitting exactly at $\widehat b_I$ makes the induction transparent: once the chosen interval is known to contain one change and to be shorter than $\Delta$, it cannot remove another change.

\begin{algorithm}[t]
\caption{\NOTD{}: seeded low-rank degree-two detection}
\label{alg:seeded}
\begin{algorithmic}[1]
\Require Data $X_{1:n}$, rank $r$, minimum scale $m$, threshold $\tau$
\State Form $\cI_m$ and compute $(\widehat b_I,\widehat S_I)$ for all $I\in\cI_m$
\State Initialize active component list $\mathcal C\gets\{(0,n]\}$ and output $\widehat{\mathcal B}\gets\varnothing$
\While{some $I\in\cI_m$ is contained in a component in $\mathcal C$ and $\widehat S_I>\tau$}
  \State Choose a shortest such $I^*=(s^*,e^*]$; append $(I^*,\widehat b_{I^*})$ to $\widehat{\mathcal B}$
  \State Replace the containing component $(a,c]$ by the nonempty components $(a,s^*]$ and $(e^*,c]$
\EndWhile
\State \Return $\widehat{\mathcal B}$
\end{algorithmic}
\end{algorithm}

\subsection{Cross-fitted local refinement}
A selected interval $I_k=(s_k,e_k]$ identifies one change but need not provide the optimal rate.  We expand it to a one-change window and split the observations by parity.  On the pilot fold, scan the window and normalize the rank-$r$ CUSUM at its maximizer:
\[
\widehat V_k=\frac{(\widehatDeltaCU^{\mathrm{pilot}})_{(r)}}
{\norm[\F]{(\widehatDeltaCU^{\mathrm{pilot}})_{(r)}}}.
\]
On the held-out fold, project $Z_i=\inner{H_2(X_i)}{\widehat V_k}$.  Disjoint outer anchor blocks estimate the left and right projected means, $\widehat\mu_{L,k}$ and $\widehat\mu_{R,k}$.  Over the central search region, define
\begin{equation}
Q_k(t)=\sum_{i\le t}(Z_i-\widehat\mu_{L,k})^2
+\sum_{i>t}(Z_i-\widehat\mu_{R,k})^2,
\qquad
\widetilde\eta_k\in\argmin_tQ_k(t).
\label{eq:refine-objective-main}
\end{equation}
The pilot direction and held-out noise are independent.  Conditional on a well-aligned direction, the problem reduces to a scalar mean change whose effective jump is at least a constant fraction of $\kappa_k$.

\section{Theory}
\label{sec:theory}

Assume throughout this section that every segment density is bounded by $L$ relative to $\mu$.  Let $\cG$ be all triples $(s,t,e)$ scanned by the seeded collection and set
\[
\ell_{\cG,\delta}=\log\frac{2(d+1)|\cG|}{\delta},\qquad
\lambda_m(\delta)=
\sqrt{2L(1+d/2)\ell_{\cG,\delta}}
+\frac{8d\ell_{\cG,\delta}}{\sqrt m}.
\]
For half-overlapping dyadic intervals, $|\cG|=O(n\log(n/m))$.

\begin{theorem}[Uniform CUSUM and low-rank control]
\label{thm:uniform-main}
With probability at least $1-\delta$,
\[
\sup_{(s,t,e)\in\cG}\norm[\op]{\ECU^{s,e}(t)}\le\lambda_m(\delta).
\]
On every scanned interval containing one change with rank at most $r$,
\[
\sup_{t\in\cT_I}
\abs{\widehat W_r^{s,e}(t)-a_b^{s,e}(t)\kappa}
\le \epsr,
\qquad \epsr=c_r\sqrt r\lambda_m(\delta).
\]
\end{theorem}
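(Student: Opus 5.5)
The plan is to prove the operator-norm bound for each fixed triple with the matrix Bernstein inequality of \citet{tropp2012user}, take a union bound over $\cG$, and then pass to the rank-$r$ score using \cref{prop:pert-main}.

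\textbf{Step 1: write the CUSUM noise as an independent sum.} Fix $(s,t,e)\in\cG$. The noise can be written as
\[
\ECU^{s,e}(t)=\sum_{i=s+1}^{e} w_i\,(H_2(X_i)-\E H_2(X_i)),
\]
with weights
\[
w_i=-\sqrt{\tfrac{e-t}{(e-s)(t-s)}}\ \text{ for } i\le t,\qquad w_i=\sqrt{\tfrac{t-s}{(e-s)(e-t)}}\ \text{ for } i>t.
\]
The summands $Y_i$ are independent, centered and symmetric of dimension $d+1$.

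\textbf{Step 2: the two Bernstein parameters.}
\begin{itemize}[leftmargin=1.35em,itemsep=1pt,topsep=2pt]
\item \emph{Uniform bound.} By \cref{prop:isometry-main}, $\norm[\op]{H_2}\le 3d$, so $\norm[\op]{Y_i}\le 6d\max_i|w_i|$. The central-half restriction gives $t-s,\,e-t\ge h/4\ge m/4$. Hence $\max_i|w_i|\le \sqrt{4/m}$ up to a factor, and $\norm[\op]{Y_i}\lesssim 12d/\sqrt m$. This should match the constant $8d$ after using the $2/3$ Bernstein factor and keeping track of $\max(t-s,e-t)\le e-s$.
\item \emph{Variance.} Using $\E_f Y^2\preceq \E_f H_2^2\preceq L\,\E_\mu H_2^2=L(1+d/2)I$, which follows from bounded density and \cref{prop:isometry-main}, we get $\norm{\sum_i \E Y_i^2}\le L(1+d/2)\sum_i w_i^2$. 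Since $\sum_i w_i^2=1$ exactly, the variance proxy is $L(1+d/2)$.
\end{itemize}

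\textbf{Step 3: tail bound and union bound.} Matrix Bernstein gives
\[
\Pp\bigl(\norm[\op]{\ECU}\ge u\bigr)\le 2(d+1)\exp\!\Bigl(-\frac{u^2/2}{\sigma^2+Ru/3}\Bigr).
\]
Its standard inversion shows $u=\sqrt{2\sigma^2\ell}+\tfrac{2R\ell}{3}$ suffices with $\ell=\ell_{\cG,\delta}$. A union bound over $|\cG|$ triples then yields the first claim with $\lambda_m(\delta)$.

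\textbf{Step 4: the rank-$r$ score.} On the same event, take an interval containing exactly one change with $\rank(D)\le r$. By \cref{prop:pert-main},
\[
\norm[\F]{(\widehatDeltaCU^{s,e}(t))_{(r)}-a_b^{s,e}(t)D}\le c_r\sqrt r\,\lambda_m(\delta).
\]
The reverse triangle inequality for $\norm[\F]{\cdot}$ gives
\[
\bigl|\widehat W_r^{s,e}(t)-\norm[\F]{a_b^{s,e}(t)D}\bigr|\le\epsr .
\]
Finally $a_b^{s,e}(t)\ge0$, so $\norm[\F]{a_b^{s,e}(t)D}=a_b^{s,e}(t)\kappa$. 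This holds uniformly over $t\in\cT_I$ because each such triple lies in $\cG$.

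\textbf{Main obstacle.} I expect the only delicate point to be constant bookkeeping in the uniform bound $R$: matching the stated $8d/\sqrt m$ using the central-half restriction and the centering. I would first try bounding $\norm[\op]{H_2-\E H_2}\le 6d$ together with $|w_i|\le 2/\sqrt m$, which gives $R\le 12d/\sqrt m$ and hence $2R\ell/3=8d\ell/\sqrt m$, exactly the stated term.
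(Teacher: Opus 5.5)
Your proposal is correct and follows the paper's proof essentially step by step: a pointwise self-adjoint matrix Bernstein bound with variance proxy $L(1+d/2)$ (from $\sum_i w_i^2=1$ and the bounded-density Loewner bound), envelope $6d\,w_{\max}$ with $w_{\max}\le 2/\sqrt m$ from the central-half restriction, a union bound over $\cG$ at level $u=\ell_{\cG,\delta}$, and then \cref{prop:pert-main} combined with the reverse triangle inequality and $a_b^{s,e}(t)>0$. The only cosmetic difference is how $w_{\max}$ is bounded: the paper uses $n_{\max}\le 3N/4$ to get $w_{\max}\le\sqrt{3/N}\le 2/\sqrt N$, whereas your $w_{\max}\le 1/\sqrt{n_{\min}}\le 2/\sqrt m$ reaches the same constant $8d/\sqrt m$ directly, so the hedging in your Step 2 is unnecessary.
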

The first term in $\lambda_m$ is $\asymp\sqrt{Ld\log(nd/\delta)}$ and does not grow with interval length because the CUSUM weights have squared sum one.  The second term is the bounded-summand correction and is lower order when $m\gg d\log(nd)$.

\begin{theorem}[Exact multiple-change recovery by induction]
\label{thm:multiple-main}
Suppose $m\le\Delta/8$, $\rank(D_k)\le r$, and the threshold satisfies
\begin{equation}
\sqrt r\lambda_m(\delta)<\tau<
\frac{\sqrt\Delta}{8\sqrt2}\kappa-\epsr.
\label{eq:threshold-main}
\end{equation}
Then, on the event of \cref{thm:uniform-main}, \NOTD{} returns exactly $K$ intervals.  They are in one-to-one correspondence with the true change points; every selected interval contains exactly one $\eta_k$ and has length at most $\Delta/4$.  In particular, its recorded maximizer satisfies $|\widehat b_k-\eta_k|\le\Delta/4$.
\end{theorem}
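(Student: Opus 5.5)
We argue on the event of \cref{thm:uniform-main} and prove by induction over the iterations of \cref{alg:seeded} an invariant with three parts. (i) Every active component is a union of the original segments cut only at endpoints of previously selected intervals, and each previously selected interval contains exactly one change point and has length at most $\Delta/4$. (ii) For every change point $\eta_k$ not yet selected, some seeded interval $I\in\cI_m$ lies inside an active component, contains $\eta_k$ in its central half $\cT_I$, contains no other change, and has length $h\in[\Delta/8,\Delta/4]$. (iii) Every seeded interval that lies inside an active component and contains no change has $\widehat S_I\le\tau$.

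Part (iii) is immediate. With no change the population CUSUM vanishes, so $\widehat W_r=\norm[\F]{(\ECU)_{(r)}}\le\sqrt r\norm[\op]{\ECU}\le\sqrt r\lambda_m(\delta)<\tau$ by the left inequality in \eqref{eq:threshold-main}.

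For the existence in (ii), take $h$ the dyadic length with $\Delta/8<h\le\Delta/4$; this exists and is at least $m$ because $m\le\Delta/8$. Starting points are spaced by $h/2$, so some interval of length $h$ has $\eta_k$ in its central half (a window of width $h/2$). Since $h\le\Delta/4<\Delta$, it contains no other change. We must also check that it avoids removed intervals. A removed interval has length at most $\Delta/4$ and contains a different change $\eta_j$ with $|\eta_j-\eta_k|\ge\Delta$, so it lies within $\Delta/4$ of $\eta_j$. Our interval lies within $h\le\Delta/4$ of $\eta_k$, so the two are disjoint. Hence our interval is contained in a single active component.

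The signal bound for this isolating interval uses \eqref{eq:a-main} at $t=b=\eta_k$ with $b-s,\,e-b\ge h/4$ and $e-s=h$. This gives $a_b\ge\sqrt{(h/4)(h/4)/h}\ge\sqrt{h}/4>\sqrt{\Delta}/(8\sqrt2)$. By \cref{thm:uniform-main}, $\widehat S_I\ge a_b\kappa-\epsr>\tau$. So the loop does not stop while a change remains unselected.

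The main step is showing that the selected shortest interval $I^*$ contains exactly one change. By (iii) it contains at least one. Suppose it contained two or more. Its length would then exceed $\Delta$, yet the isolating interval from (ii) has length at most $\Delta/4$, lies in an active component, and exceeds the threshold. That contradicts the choice of $I^*$ as shortest. So $I^*$ contains exactly one change, and $|I^*|\le\Delta/4$ because a shorter above-threshold interval exists.

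A delicate point is whether $I^*$ could contain a change $\eta_k$ that was already selected. This cannot happen, because the removed interval covering $\eta_k$ is no longer part of any active component. The same point shows that removing $I^*$ destroys no other change's isolating interval: $I^*$ has length at most $\Delta/4$ and contains only $\eta_{k^*}$, so the disjointness argument above applies again. The invariant therefore propagates.

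Each iteration selects a new change point, and the loop runs until none remain, so there are exactly $K$ iterations and a bijection between selected intervals and change points. Finally, $\widehat b_{I^*}\in I^*$ and $\eta_k\in I^*$, so $|\widehat b_k-\eta_k|\le|I^*|\le\Delta/4$.
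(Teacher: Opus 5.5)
Your proposal is correct and follows essentially the paper's induction. Both arguments have the same three steps: no-change intervals stay below $\tau$ by the $\sqrt r\lambda_m(\delta)$ bound, every undetected change has an above-threshold balanced seeded interval of length at most $\Delta/4$, and hence the shortest active interval contains exactly one undetected change. The only difference is bookkeeping: the paper's invariant keeps every undetected change at least $3\Delta/4$ from its component's boundaries, including $0$ and $n$, while you track that each removed interval is short and sits around its own change. Your version gives the same containment once you also note that your isolating interval lies in $(0,n]$, which holds because $\eta_k$ is at least $\Delta$ from both global endpoints.
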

The proof, in \cref{app:multiple-proof}, follows the requested induction.  At each stage: (i) an interval with no remaining change has score below $\tau$; (ii) every remaining change has a balanced isolating seeded interval of length in $[\Delta/8,\Delta/4]$ whose score exceeds $\tau$; and (iii) the shortest active interval is therefore shorter than $\Delta$ and must contain exactly one change.  Deleting it leaves every other change at least $3\Delta/4$ from the new component boundary, preserving the induction hypothesis.

\begin{corollary}[Preliminary localization on a one-change window]
\label{cor:prelim-main}
Let $(s,e]$ contain one change $b$, let $N=e-s$, and suppose $b$ and all candidates are at least $\alpha N$ from the endpoints.  Under the uniform event and exact rank,
\[
|\widehat b-b|
\le \frac{2c_r}{\alpha}\frac{\sqrt{Nr}\,\lambda_m(\delta)}{\kappa}.
\]
\end{corollary}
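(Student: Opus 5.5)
The argument combines the uniform score control of \cref{thm:uniform-main} with the exact margin \eqref{eq:exact-margin-main}, working with the population coefficient $a(t)=a_b^{s,e}(t)$ throughout. Because $\widehat b$ maximizes $\widehat W_r^{s,e}$ over the candidates and $b$ is itself a candidate, \cref{thm:uniform-main} gives
\[
a(\widehat b)\kappa\ \ge\ \widehat W_r^{s,e}(\widehat b)-\epsr\ \ge\ \widehat W_r^{s,e}(b)-\epsr\ \ge\ a(b)\kappa-2\epsr .
\]
So $a(b)-a(\widehat b)\le 2\epsr/\kappa$, with $\epsr=c_r\sqrt r\lambda_m(\delta)$. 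If $b$ is not a candidate, the same chain works with the candidate nearest to $b$ in place of $b$, at the cost of at most one unit of error; I would absorb this into the statement or assume $b\in\cT_I$. The remaining task is to turn this gap in $a$ into a bound on the distance $|\widehat b-b|$.

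For this conversion, write $\norm[\F]{\DeltaCU(t)}=a(t)\kappa$ and factor the left side of \eqref{eq:exact-margin-main}:
\[
a(b)^2-a(t)^2=\bigl(a(b)-a(t)\bigr)\bigl(a(b)+a(t)\bigr)\le 2a(b)\bigl(a(b)-a(t)\bigr).
\]
Hence, for $t<b$,
\[
\frac{(e-b)(b-t)}{e-t}\ \le\ 2a(b)\bigl(a(b)-a(t)\bigr).
\]
The left side is at least $(e-b)(b-t)/N\ge\alpha(b-t)$, using $e-t\le N$ and $e-b\ge\alpha N$. The case $t>b$ is symmetric: there the left side is at least $(b-s)(t-b)/N\ge \alpha(t-b)$. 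This is the key linear lower bound, and it is where the hypothesis that $b$ lies at least $\alpha N$ from both endpoints is used.

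It remains to bound $a(b)=\sqrt{(b-s)(e-b)/N}\le\sqrt N/2$. Combining the three displays,
\[
\alpha|\widehat b-b|\ \le\ 2\cdot\frac{\sqrt N}{2}\cdot\frac{2\epsr}{\kappa}\ =\ \frac{2c_r\sqrt{Nr}\,\lambda_m(\delta)}{\kappa},
\]
which is exactly the claimed bound. The only delicate step is tracking these constants so that they match $2c_r/\alpha$. It may also be necessary to check that $(s,t,e)\in\cG$, so that the uniform event of \cref{thm:uniform-main} applies; I would take this as part of the hypotheses, since the corollary says ``under the uniform event.'' The candidates lying at least $\alpha N$ from the endpoints is not needed for the margin argument itself; it only ensures that the candidates fall within the range where \cref{thm:uniform-main} is stated.
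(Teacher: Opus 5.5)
Your proof is correct and follows the paper's argument essentially line for line: the $2\epsr$ score gap from the uniform deviation bound, the difference-of-squares conversion via the exact margin (giving the linear lower bound $\alpha|t-b|$), and the peak bound $a(b)\le\sqrt N/2$, with identical constants. Working with $a(t)$ instead of $W(t)=a(t)\kappa$ is cosmetic. You are right that $b$ must itself be a candidate, which the paper uses implicitly; however, your ``one unit'' fallback for a non-candidate $b$ only holds when $b$ lies inside a contiguous candidate range, so simply assuming $b\in\cT_I$ is the cleaner fix.
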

This rate is sufficient to isolate changes but is not generally minimax.  The refinement removes the factor $\sqrt N$.

For the refinement, assume that for every deterministic $V$ with $\norm[\F]{V}=1$, the centered scalar projection $\xi=\inner{H_2(X)-M_k}{V}$ satisfies the Bernstein moment-generating-function condition
\begin{equation}
\log\E e^{\lambda\xi}
\le \frac{\lambda^2\nu^2}{2(1-B|\lambda|)},
\qquad |\lambda|<B^{-1}.
\label{eq:scalar-mgf-main}
\end{equation}
In particular, for independent projections this implies
\begin{equation}
\Pp\!\left(\left|\sum_{i=1}^q\xi_i\right|\ge x\right)
\le2\exp\left[-c\min\left\{\frac{x^2}{q\nu^2},\frac{x}{B}\right\}\right],
\label{eq:scalar-bernstein-main}
\end{equation}
as well as the maximal line-crossing inequality used in the proof.  Bounded observations always give conservative parameters of order $d$; for independent sub-Gaussian coordinates, quadratic-form concentration can make them dimension-free up to constants.

\begin{theorem}[Cross-fitted localization]
\label{thm:refine-main}
Consider a one-change refinement window with pure outer anchor blocks of size $q$.  Suppose the pilot fold yields $\norm[\F]{\widehat V}=1$ and
$\inner{D}{\widehat V}\ge\kappa/2$.  If
\[
q\ge C\left(\frac{\nu^2}{\kappa^2}+\frac{B}{\kappa}\right)
\log\frac{K}{\delta},
\]
then, simultaneously for all $K$ windows, with probability at least $1-\delta$,
\begin{equation}
|\widetilde\eta_k-\eta_k|
\le C\left(\frac{\nu^2}{\kappa_k^2}+\frac{B}{\kappa_k}\right)
\log\frac{K}{\delta}+1.
\label{eq:refined-rate-main}
\end{equation}
Moreover, the direction condition holds whenever the pilot population CUSUM at its maximizer is at least $8\epsr$.
\end{theorem}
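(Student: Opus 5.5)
The plan is to condition on the pilot fold and then treat each refinement window as a scalar mean-change problem with estimated anchor means. Because the folds are determined by parity, $\widehat V_k$ is a function of the pilot observations only. Conditionally on the pilot fold, the held-out projections $Z_i=\inner{H_2(X_i)}{\widehat V_k}$ are therefore independent. Their means are $\mu_L=\inner{M_{k-1}}{\widehat V_k}$ for $i\le\eta_k$ and $\mu_R=\inner{M_k}{\widehat V_k}$ for $i>\eta_k$. The effective jump is $\theta=\mu_R-\mu_L=\inner{D_k}{\widehat V_k}\ge\kappa_k/2$. Since $\widehat V_k$ is fixed under the conditioning, the centered terms $\xi_i=Z_i-\E Z_i$ satisfy \eqref{eq:scalar-mgf-main}, and hence \eqref{eq:scalar-bernstein-main}. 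Every statement below is proved conditionally and then integrated over the pilot fold. A union bound over the $K$ windows replaces $\delta$ by $\delta/K$.

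\textbf{Step 1: anchor estimates.} The outer anchor blocks are pure and have size $q$. By \eqref{eq:scalar-bernstein-main} with $x$ chosen appropriately, we get $|\widehat\mu_{L,k}-\mu_L|,|\widehat\mu_{R,k}-\mu_R|\le\theta/8$ with probability $1-\delta/(2K)$. This works because $q\ge C(\nu^2/\kappa^2+B/\kappa)\log(K/\delta)$ and $\theta\ge\kappa_k/2\ge\kappa/2$. If the anchors are disjoint from the search region, they are independent of the search-region noise, which is convenient but not essential.

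\textbf{Step 2: the objective difference.} Take $t=\eta_k+j$ with $j>0$; the case $j<0$ is symmetric. Then
\[
Q_k(t)-Q_k(\eta_k)=\sum_{i=\eta_k+1}^{t}\bigl[(Z_i-\widehat\mu_L)^2-(Z_i-\widehat\mu_R)^2\bigr]
=\sum_{i=\eta_k+1}^{t}(\widehat\mu_R-\widehat\mu_L)\bigl(2Z_i-\widehat\mu_L-\widehat\mu_R\bigr).
\]
Write $Z_i=\mu_R+\xi_i$ and set $\widehat\theta=\widehat\mu_R-\widehat\mu_L\in[3\theta/4,5\theta/4]$. Each summand equals $\widehat\theta(2\mu_R-\widehat\mu_L-\widehat\mu_R)+2\widehat\theta\xi_i$. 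On the Step 1 event the first part is at least $\widehat\theta\cdot\tfrac{3}{4}\theta\ge c\theta^2$. Hence $Q_k(t)-Q_k(\eta_k)\ge \widehat\theta\bigl(c'\theta j-2|\textstyle\sum_{i=\eta_k+1}^{\eta_k+j}\xi_i|\bigr)$. It follows that $\widetilde\eta_k>\eta_k+J$ can only happen if some partial sum crosses the line $\sum_{i\le j}\xi_i\ge c''\theta j$ for some $j>J$.

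\textbf{Step 3: the maximal line-crossing bound.} This is the main obstacle. I would handle it by peeling over dyadic blocks $j\in(2^{\ell}J,2^{\ell+1}J]$. On each block, Bernstein's inequality combined with a maximal inequality (Ottaviani/Lévy-type, or Doob's inequality applied to the exponential martingale $e^{\lambda S_j}$, which the MGF bound \eqref{eq:scalar-mgf-main} controls directly) gives
\[
\Pp\Bigl(\max_{j\le 2^{\ell+1}J}S_j\ge c''\theta 2^{\ell}J\Bigr)\le\exp\bigl[-c\,2^{\ell}J\min\{\theta^2/\nu^2,\theta/B\}\bigr].
\]
The key point is that choosing $\lambda\asymp\min\{\theta/\nu^2,1/B\}$ in the exponential supermartingale gives a geometric series in $\ell$. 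Summing this series, its total is at most $\delta/(4K)$ once $J=C(\nu^2/\kappa_k^2+B/\kappa_k)\log(K/\delta)$, using $\theta\ge\kappa_k/2$. The ``$+1$'' in \eqref{eq:refined-rate-main} absorbs integer rounding and the boundary effect from parity subsampling, since held-out indices are spaced by $2$. Combining the two sides and the union bound over $k$ yields \eqref{eq:refined-rate-main}.

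\textbf{Step 4: the direction condition.} Apply \cref{thm:uniform-main} on the pilot fold, which has the same structure with half the sample. At the pilot maximizer $\hat t$, set $A=a(\hat t)D$ and $\widehat A=(\widehatDeltaCU^{\mathrm{pilot}})_{(r)}$, so that $\|\widehat A-A\|_\F\le\epsr$ by \cref{prop:pert-main}. If $\|A\|_\F=a(\hat t)\kappa\ge8\epsr$, then the unit direction $U=D/\kappa$ satisfies
\[
\|\widehat V-U\|_\F\le 2\|\widehat A-A\|_\F/\|A\|_\F\le1/4.
\]
Hence $\inner{D}{\widehat V}=\kappa\bigl(1-\tfrac12\|\widehat V-U\|_\F^2\bigr)\ge\kappa/2$. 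Because $\widehat V$ is determined by the pilot fold, it remains independent of the held-out data, as required.
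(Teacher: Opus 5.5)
Your proposal is correct and follows essentially the paper's route. Like the paper, you condition on the pilot fold, obtain anchor accuracy $\theta/8$, use the per-step drift $\widehat\theta(\theta-u_L-u_R)\ge 9\theta^2/16$, bound the line crossing with an exponential supermartingale, union bound over windows, and absorb the fold-to-time factor two into $C$. Your dyadic peeling is harmless but unnecessary: the paper applies Ville's inequality once to $\exp\{\lambda S_j-j\psi(\lambda)\}$, which controls all $j\ge J$ at once.

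The only thing to add is in Step 4. The paper reads the hypothesis as the population peak $S=\max_t\norm[\F]{\DeltaCU(t)}\ge 8\epsr$, not as the population value at the empirical pilot maximizer $\hat t$. To cover that reading, insert $\norm[\F]{\DeltaCU(\hat t)}\ge\widehat W(\hat t)-\epsr\ge\widehat W(b)-\epsr\ge S-2\epsr\ge 6\epsr$. Your normalization bound then gives $\norm[\F]{\widehat V-U}\le 1/3$, hence $\inner{D}{\widehat V}\ge(17/18)\kappa\ge\kappa/2$.
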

The extra $1$ accounts for mapping the parity fold back to the original time index.  In the usual small-jump regime the first term dominates, giving $\widetilde O_{\Pp}(\kappa_k^{-2})$.

\begin{theorem}[Minimax lower bound]
\label{thm:lower-main}
There are constants $c,c_0>0$ such that, for $0<\kappa\le c_0$ and a single change separated from the endpoints by at least $c\kappa^{-2}$,
\[
\inf_{\widehat\eta}\sup_{P\in\mathcal P(\kappa)}
\E_P|\widehat\eta-\eta(P)|\ge\frac{c}{\kappa^2},
\]
where $\mathcal P(\kappa)$ is a class of densities on $[-1,1]^d$ whose degree-two jump has Frobenius norm $\kappa$ and rank two.
\end{theorem}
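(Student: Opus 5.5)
We prove \cref{thm:lower-main} with a two-point Le Cam argument. The alternatives differ only in where a single change occurs. Fix a small $\theta>0$ and define the product density
\[
f_\theta(x)=1+\theta x_1x_2 .
\]
For $|\theta|\le 1$ this is nonnegative on $[-1,1]^d$, bounded by $2$, and integrates to one. It has the same uniform marginals as $f_0=1$. By \eqref{eq:p2-main} we have $P_2(f_\theta-f_0)=\theta x_1x_2$, so $M(f_\theta)-M(f_0)$ has nonzero entries only at positions $(1,2)$ and $(2,1)$, each equal to $3\E_{f_\theta}(X_1X_2)/\sqrt2=3(\theta/9)/\sqrt2$. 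Hence the jump is a rank-two symmetric matrix. By \cref{prop:isometry-main} its Frobenius norm is
\[
\kappa=\norm[L^2(\mu)]{\theta x_1x_2}=\theta/3 .
\]
We therefore set $\theta=3\kappa$, which requires $\kappa\le c_0\le1/3$.

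Next take an integer $\delta\asymp\kappa^{-2}$ and change points $\eta$ and $\eta'=\eta+\delta$, both at distance at least $c\kappa^{-2}$ from the endpoints. Let $P$ be the law with $X_i\sim f_0$ for $i\le\eta$ and $X_i\sim f_\theta$ afterwards, and let $P'$ be the same law with $\eta'$ in place of $\eta$. Both belong to $\mathcal P(\kappa)$. The two laws differ only on the $\delta$ indices $\eta<i\le\eta'$, where one is $f_\theta$ and the other is $f_0$. Using independence, KL tensorizes:
\[
\mathrm{KL}(P'\|P)=\delta\,\mathrm{KL}(f_0\|f_\theta)\le\delta\,\chi^2(f_0\|f_\theta)=\delta\int\frac{(f_0-f_\theta)^2}{f_\theta}\,d\mu\le 2\delta\,\theta^2\E_\mu X_1^2X_2^2 .
\]
The last step uses $f_\theta\ge1/2$ when $\theta\le1/2$. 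Since $\E_\mu X_1^2X_2^2=1/9$, the bound equals $2\kappa^2\delta$. Choosing $\delta=\lfloor 1/(8\kappa^2)\rfloor$ makes it at most $1/4$. Pinsker then gives $\mathrm{TV}(P,P')\le\sqrt{1/8}<1/2$.

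Le Cam's two-point inequality for an absolute loss with separation $\delta$ gives
\[
\inf_{\widehat\eta}\max\{\E_P|\widehat\eta-\eta|,\E_{P'}|\widehat\eta-\eta'|\}\ge\frac{\delta}{4}\bigl(1-\mathrm{TV}(P,P')\bigr).
\]
The right-hand side is at least $c/\kappa^2$, because $\lfloor1/(8\kappa^2)\rfloor\ge1/(16\kappa^2)$ whenever $\kappa\le c_0$ is small. This proves the claim.

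The step needing the most care is bookkeeping rather than difficulty. We must confirm that the constructed pair meets the class definition: the jump is exactly rank two, the Frobenius norm equals exactly $\kappa$, the densities are bounded by $L\ge2$, and the endpoint separation holds. The last requires $n\gtrsim\kappa^{-2}$, which is implicit in the statement. Beyond that, the only check is the chi-square bound through the lower bound $f_\theta\ge1/2$.
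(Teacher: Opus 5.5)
Your proof is correct and follows the paper's argument. Your density $f_{3\kappa}=1+3\kappa x_1x_2$ is exactly the paper's rank-two family $1+3\theta x_1x_2$ with $\theta=\kappa$, and both proofs bound the KL divergence over the $\delta\asymp\kappa^{-2}$ differing observations, apply Pinsker, and finish with Le Cam's two-point bound; the only cosmetic difference is that you control $\mathrm{KL}(f_0\|f_\theta)$ by $\chi^2$ rather than by the paper's expansion $(1+u)\log(1+u)\le u+Cu^2$. One bookkeeping fix: your use of $f_\theta\ge1/2$ with $\theta=3\kappa$ requires $c_0\le1/6$, not $c_0\le1/3$.
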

The construction uses $f_\theta(x)=1+3\theta x_1x_2$, for which the jump matrix has rank two and Frobenius norm $|\theta|$.  Moving the change by $h$ alters the likelihood in only $h$ observations and gives KL divergence $O(h\theta^2)$; Le Cam's method with $h\asymp\theta^{-2}$ proves the result.

\paragraph{Temporal dependence.}
If $Z_i=H_2(X_i)-\E H_2(X_i)$ is uniformly geometrically $\beta$-mixing and $\norm[\op]{Z_i}\le B_d$, the weighted CUSUM is a dependent self-adjoint matrix sum.  The Bernstein inequality of \citet{banna2016bernstein} yields a replacement $\lambda_m^{\mathrm{mix}}$ involving its dependent variance proxy and a logarithmic mixing penalty.  Since \cref{thm:multiple-main} is deterministic conditional on uniform CUSUM control, the entire induction carries over after replacing $\lambda_m$ by $\lambda_m^{\mathrm{mix}}$; see \cref{app:mixing}.

\section{Experiments}
\label{sec:experiments}

All experiments use the same $H_2$ implementation and symmetric eigentruncation.  The preliminary scans use the odd observations; the even observations are reserved for refinement.  Full settings, seeds, and additional tables are in \cref{app:experiments}.

\paragraph{Dimension scaling.}
We generate Gaussian-copula observations with uniform marginals.  Only the dependence between coordinates $1$ and $2$ changes, from copula correlation $0$ to $0.85$ at $n/2$; all coordinate means and marginals remain unchanged.  The projected jump is rank two.  We set $n=60d$ and run $50$ replications for $d\in\{20,50,100,200\}$.  The median absolute errors are reported in \cref{tab:single-main} and \cref{fig:dimension}.  \LRD{} remains near the oracle pair projection, whereas the unregularized full degree-two score degrades with $d$ and mean CUSUM is uninformative.  At $d=200,n=12000$, the refined median error is $4$ and the mean runtime for the complete five-method replicate is $0.92$ seconds on the evaluation machine.

\begin{table}[t]
\centering
\caption{Median absolute localization error over $50$ single-change replications.  ``Full D2'' uses the Frobenius norm without rank truncation; ``Oracle'' knows the affected interaction.}
\label{tab:single-main}
\small
\setlength{\tabcolsep}{4.5pt}
\begin{tabular}{rrrrrr}
\toprule
$d$ & \LRD{} pre. & \LRD{} ref. & Full D2 & Mean & Oracle\\
\midrule
20  & 6 & 11 & 19 & 419  & 4\\
50  & 6 & 10 & 16 & 684  & 4\\
100 & 7 & 8  & 41 & 1416 & 6\\
200 & 4 & 4  & 96 & 3166 & 4\\
\bottomrule
\end{tabular}
\end{table}

\begin{figure}[t]
\centering
\includegraphics[width=0.97\linewidth]{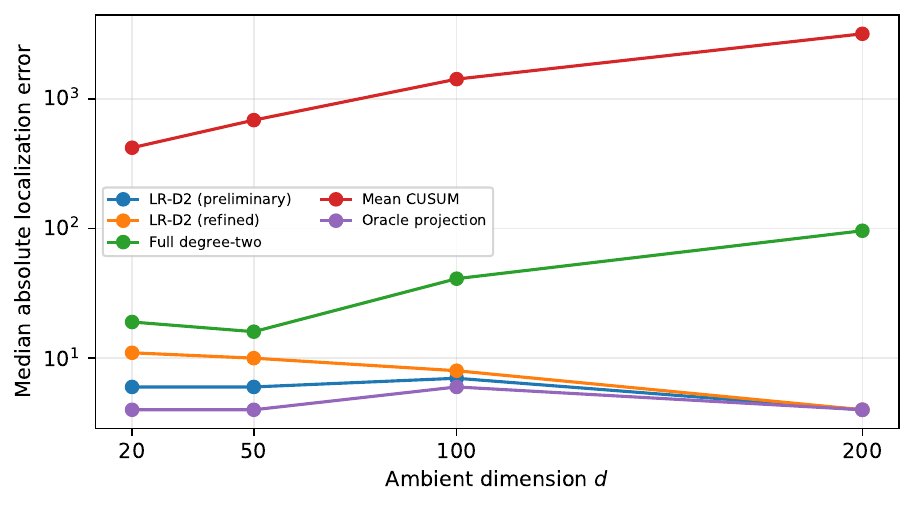}
\caption{Median single-change error versus ambient dimension.  Sample size scales as $n=60d$.  The low-rank estimator remains stable through $d=200$, while the full degree-two and mean scans accumulate high-dimensional noise.}
\label{fig:dimension}
\end{figure}

\paragraph{Multiple changes in $d=100$.}
We use $n=24000$ and changes at $(6000,12000,18000)$, alternating the first-pair copula correlation between $0.95$ and $-0.95$.  A threshold of $30$ is rounded above the maximum of six independent null scans.  Across $12$ replications, \NOTD{} estimates $K=3$ every time.  The median Hausdorff error is $2$ before and $1$ after refinement; all refined estimates are within $10$ time points, and mean runtime is $8.93$ seconds.

\paragraph{UCI human activity, 128 features.}
From the UCI smartphone activity data \citep{anguita2013har}, we take $400$ walking and $400$ walking-downstairs observations and select the $128$ highest-variance features.  An independent Rademacher sign is applied to every row, eliminating first moments while preserving all degree-two products.  Over $50$ randomized replications, refined \LRD{} has median error $0$, exact recovery rate $72\%$, and error at most $4$ in $94\%$ of runs.  Full D2 has median $0$ but does not benefit from rank regularization; mean CUSUM has median error $256$ and never falls within $4$.

\section{Discussion and limitations}

Degree-two projection turns a broad class of nonparametric changes into a structured matrix-mean problem.  The representation is especially useful when a high-dimensional dependence change is generated by a small number of interaction directions.  The multiple-change induction and cross-fitted refinement separate three tasks that are often conflated: detecting a signal, isolating each change, and obtaining the optimal local rate.

The method intentionally does not detect every possible density change.  A change orthogonal to all degree-at-most-two polynomials is invisible, and higher-degree extensions require tensor structure or another finite feature family.  Exact theory assumes low-rank projected jumps; approximate rank introduces a transparent singular-value-tail bias.  The sharp refinement theorem is stated for independent observations, while the current dependent theory directly covers the preliminary matrix scan and multiple-change induction.  Finally, the empirical study is designed to validate scaling and identifiability rather than claim dominance on every change-point benchmark.  Richer learned feature maps and adaptive rank selection are important next steps.

\section*{Ethics statement}
This work develops a statistical method and uses public or synthetic data.  The UCI human-activity data are used only for a methodological benchmark; no attempt is made to infer sensitive personal attributes.  Failure to detect a change should not be interpreted as evidence that two underlying distributions are identical, because the method tests only the chosen degree-two representation.

\section*{Reproducibility statement}
The appendices give complete proofs of all stated results, the exact feature normalization, pseudocode, simulation distributions, tuning rules, and additional numerical summaries.  The supplementary code generates every reported table and figure from fixed seeds.  The UCI data are not redistributed; the code documents the public source and expected file layout.

\section*{AI use statement}
Generative AI tools (OpenAI GPT-5.6 Pro) were used to help formalize parts of the methodology, formulate and check mathematical claims, assist with proof development, design and implement synthetic and UCI-HAR experiments, interpret numerical outputs, and edit and format the manuscript.  All AI-assisted derivations, code, numerical results, citations, and prose were reviewed by the authors, who take responsibility for the final content and for independently verifying it before submission.

\bibliography{references}
\bibliographystyle{iclr2027_conference}
\appendix
\section{Degree-two representation: complete calculations}
\label{app:representation}

This section proves \cref{prop:isometry-main} and records the identities used by the concentration argument.  Throughout, $\mu$ is the product uniform probability measure on $[-1,1]^d$.

\subsection{Orthogonal projection coefficients}
The one-dimensional Legendre polynomials used in the paper satisfy
\[
\int p_0^2\,d\mu_1=1,\qquad
\int p_1^2\,d\mu_1=\frac13,\qquad
\int p_2^2\,d\mu_1=\frac15,
\]
and distinct polynomials among $p_0,p_1,p_2$ are orthogonal.  Because $\mu$ is a product measure, an orthogonal basis for the total-degree-at-most-two subspace is
\[
1,\quad \{x_j\}_{j=1}^d,\quad \{p_2(x_j)\}_{j=1}^d,
\quad \{x_ix_j\}_{1\le i<j\le d}.
\]
If $f=dP/d\mu$, the coefficient of a basis element $\phi$ in the orthogonal projection is $\langle f,\phi\rangle/\|\phi\|_2^2=\E_f\phi(X)/\|\phi\|_2^2$.  This gives
\[
a_j=3\E_fX_j,\qquad b_j=5\E_fp_2(X_j),\qquad
\gamma_{ij}=9\E_f(X_iX_j),
\]
and proves \eqref{eq:p2-main}.

The expectation matrix $M(f)=\E_fH_2(X)$ has entries
\begin{equation}
M(f)_{00}=1,\qquad M(f)_{0j}=\frac{a_j}{\sqrt6},\qquad
M(f)_{jj}=\frac{b_j}{\sqrt5},\qquad
M(f)_{ij}=\frac{\gamma_{ij}}{3\sqrt2}\quad(i<j).
\label{eq:appendix-M-coeff}
\end{equation}
Thus the apparently nonparametric object $P_2f$ is estimated by the ordinary sample mean of a fixed matrix-valued feature map.

\subsection{Proof of the isometry}
\begin{proof}[Proof of the first assertion in \cref{prop:isometry-main}]
By orthogonality,
\begin{equation}
\norm[L^2(\mu)]{P_2f}^2
=1+\sum_{j=1}^d\frac{a_j^2}{3}
 +\sum_{j=1}^d\frac{b_j^2}{5}
 +\sum_{i<j}\frac{\gamma_{ij}^2}{9}.
\label{eq:p2-norm-app}
\end{equation}
Using \eqref{eq:appendix-M-coeff} and remembering that every off-diagonal entry occurs twice in the Frobenius norm,
\begin{align*}
\norm[\F]{M(f)}^2
&=1+2\sum_{j=1}^d\frac{a_j^2}{6}
 +\sum_{j=1}^d\frac{b_j^2}{5}
 +2\sum_{i<j}\frac{\gamma_{ij}^2}{18}\\
&=1+\sum_{j=1}^d\frac{a_j^2}{3}
 +\sum_{j=1}^d\frac{b_j^2}{5}
 +\sum_{i<j}\frac{\gamma_{ij}^2}{9}.
\end{align*}
The same calculation applied to the coefficient differences of $f$ and $g$ proves
\[
\norm[\F]{M(f)-M(g)}^2
=\norm[L^2(\mu)]{P_2(f-g)}^2.
\]
\end{proof}

\subsection{Envelope and isotropic second moment}
\begin{lemma}[Uniform matrix envelope]
\label{lem:H-envelope-app}
For every $x\in[-1,1]^d$,
\[
\norm[\op]{H_2(x)}\le\norm[\F]{H_2(x)}\le3d.
\]
Consequently, for any density $f$, $\|M(f)\|_{\op}\le3d$ and
$\|H_2(X)-M(f)\|_{\op}\le6d$ almost surely.
\end{lemma}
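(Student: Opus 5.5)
The plan is to bound the Frobenius norm of $H_2(x)$ entrywise and then pass to the operator norm using $\norm[\op]{A}\le\norm[\F]{A}$, which holds for any matrix. For $x\in[-1,1]^d$ every entry is bounded in absolute value by an explicit constant. The $(0,0)$ entry equals $1$. Each $(0,j)$ entry satisfies $|3x_j/\sqrt6|\le 3/\sqrt6$. Each diagonal entry satisfies $|\sqrt5\,p_2(x_j)|\le\sqrt5$, because $p_2$ ranges over $[-1/2,1]$ on $[-1,1]$. Each off-diagonal entry satisfies $|3x_ix_j/\sqrt2|\le 3/\sqrt2$.

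Summing the squares, and counting each off-diagonal entry twice, gives
\[
\norm[\F]{H_2(x)}^2\le 1+2d\cdot\tfrac96+5d+2\binom d2\cdot\tfrac92
=1+8d+\tfrac92 d(d-1)=1+\tfrac72 d+\tfrac92 d^2 .
\]
It remains to check that this is at most $9d^2$. That reduces to $\tfrac92 d^2-\tfrac72 d-1\ge0$, which holds for every integer $d\ge1$; at $d=1$ the left side is exactly $0$. Taking square roots gives $\norm[\op]{H_2(x)}\le\norm[\F]{H_2(x)}\le 3d$.

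For the consequences, we use Jensen's inequality. Since the operator norm is convex, $\norm[\op]{M(f)}=\norm[\op]{\E_fH_2(X)}\le\E_f\norm[\op]{H_2(X)}\le 3d$. The triangle inequality then gives $\norm[\op]{H_2(X)-M(f)}\le 3d+3d=6d$ almost surely, since $X\in[-1,1]^d$ almost surely under any density with respect to $\mu$.

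The only step needing care is the final comparison $1+\tfrac72d+\tfrac92d^2\le 9d^2$. It is tight at $d=1$, so the constant bounds on the individual entries cannot be loosened.
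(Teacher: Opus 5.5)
Your proposal is correct and matches the paper's proof: the same entrywise bounds (using $|p_2|\le 1$ on $[-1,1]$), the same Frobenius count $1+8d+\tfrac92 d(d-1)\le 9d^2$, then Jensen's inequality for $\|M(f)\|_{\op}$ and the triangle inequality for the centered bound. Your explicit check that $\tfrac92 d^2-\tfrac72 d-1\ge 0$ for $d\ge1$, with equality at $d=1$, fills in a step the paper only asserts.
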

\begin{proof}
Since $|p_2(x)|\le1$ on $[-1,1]$, direct summation gives
\begin{align*}
\norm[\F]{H_2(x)}^2
&\le 1+2d\frac{9}{6}+5d
  +2\binom d2\frac92\\
&=1+8d+\frac92d(d-1)
\le9d^2,
\end{align*}
for every $d\ge1$.  Jensen's inequality gives
$\|M(f)\|_{\op}\le\E_f\|H_2(X)\|_{\op}\le3d$, and the centered bound follows by the triangle inequality.
\end{proof}

\begin{lemma}[Exact reference second moment]
\label{lem:H-isotropy-app}
If $X\sim\mu$, then
\[
\E_\mu[H_2(X)^2]=\left(1+\frac d2\right)I_{d+1}.
\]
\end{lemma}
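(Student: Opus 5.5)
The plan is to compute $\E_\mu[H_2(X)^2]$ entrywise. Write $(H_2^2)_{ab}=\sum_{c=0}^d [H_2]_{ac}[H_2]_{cb}$, using symmetry of $H_2$. Under $\mu$ the coordinates $X_1,\ldots,X_d$ are i.i.d.\ uniform on $[-1,1]$. The needed one-dimensional moments follow from the Legendre norms recorded above: $\E X^2=1/3$, $\E p_2(X)^2=1/5$, and every monomial containing an odd power of some coordinate has mean zero. The proof splits into diagonal entries, which should each equal $1+d/2$, and off-diagonal entries, which should vanish by a parity argument.

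\emph{Diagonal entries.} Here $(H_2^2)_{aa}=\sum_c [H_2]_{ac}^2$, so only second moments appear.
\begin{itemize}
\item For $a=0$: the $c=0$ term is $1$, and each $c=j\ge1$ contributes $\tfrac96\E X_j^2=\tfrac12$. The total is $1+d/2$.
\item For $a=j\ge1$: the term $c=0$ gives $\tfrac96\cdot\tfrac13=\tfrac12$, and $c=j$ gives $5\E p_2(X_j)^2=1$.
\item Each of the $d-1$ indices $c=i\ne j$ gives $\tfrac92\E X_i^2\E X_j^2=\tfrac92\cdot\tfrac19=\tfrac12$.
\end{itemize}
Summing for $a=j$ gives $\tfrac12+1+\tfrac{d-1}2=1+\tfrac d2$. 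This confirms that the normalization in \eqref{eq:h2-main} was chosen so that every row has the same mean squared norm.

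\emph{Off-diagonal entries.} I would argue by parity. Every entry $[H_2(x)]_{ac}$ is a polynomial whose parity in each coordinate $x_k$ is fixed: $x_j$ is odd in $x_j$, $p_2(x_j)$ is even, and $x_ix_j$ is odd in both $x_i$ and $x_j$. For $a\ne b$, I would check that each product $[H_2]_{ac}[H_2]_{cb}$ is odd in at least one coordinate, namely $x_j$ or $x_k$ where $\{a,b\}\setminus\{0\}$ involves $j$ or $k$.

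\begin{itemize}
\item For $(0,j)$: the possible products are $x_j$, $x_jp_2(x_j)$ and $x_i^2x_j$. Each is odd in $x_j$.
\item For $(j,k)$ with $j\ne k$: the possible products are $x_jx_k$, $p_2(x_j)x_jx_k$, $x_jx_kp_2(x_k)$ and $x_i^2x_jx_k$. Each is odd in $x_j$, and in $x_k$.
\end{itemize}
By independence and the symmetry of the uniform law, each such term has mean zero. Hence $\E_\mu[H_2(X)^2]=(1+d/2)I_{d+1}$.

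The main obstacle is bookkeeping rather than ideas. The diagonal computation must account correctly for the $\sqrt6$ and $\sqrt2$ scalings and for the count $d-1$ of interaction terms in a row. The parity check must cover every index configuration, including $c\in\{0,a,b\}$ and the remaining indices.
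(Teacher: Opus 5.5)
Your proof is correct and matches the paper's argument essentially step for step. Both use the same entrywise diagonal computation, giving $1+\tfrac d2$ for row $0$ and $\tfrac12+1+\tfrac{d-1}{2}$ for rows $j\ge1$. Both also use the same parity argument: every term of an off-diagonal entry contains an odd power of $X_j$ or $X_k$, so it has mean zero under the symmetric product law.
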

\begin{proof}
Write indices as $0,1,\ldots,d$.  For the $(0,0)$ entry,
\[
\E(H_2^2)_{00}=1+\sum_{j=1}^d\frac32\E X_j^2
=1+\frac d2.
\]
For $j\ge1$,
\begin{align*}
\E(H_2^2)_{jj}
&=\frac32\E X_j^2+5\E p_2(X_j)^2
  +\sum_{k\ne j}\frac92\E(X_j^2X_k^2)\\
&=\frac12+1+\frac{d-1}{2}=1+\frac d2.
\end{align*}
Every off-diagonal entry has expectation zero.  For example,
$(H_2^2)_{0j}$ is a sum of terms proportional to $X_j$, $X_jp_2(X_j)$, or $X_jX_k^2$, all of which have zero expectation.  For $j\ne k$, every term in $(H_2^2)_{jk}$ contains an odd power of either $X_j$ or $X_k$.  Independence and symmetry of the coordinates complete the proof.
\end{proof}

If $f\le L$ relative to $\mu$, the last lemma implies the Loewner bound
\begin{equation}
\E_f[H_2(X)^2]
=\int H_2(x)^2 f(x)\,d\mu(x)
\preceq L\left(1+\frac d2\right)I_{d+1}.
\label{eq:density-second-moment-app}
\end{equation}
This is the point at which the bounded-density assumption enters the simple independent-observation theory.

\section{Population matrix CUSUM on a one-change interval}
\label{app:population}

Suppose $(s,e]$ contains exactly one change $b$, with matrix means $M_0$ before $b$ and $M_1$ after $b$, and put $D=M_1-M_0$.

\begin{proof}[Proof of \eqref{eq:population-scalar-main}--\eqref{eq:a-main}]
If $t<b$, the left empirical mean is unbiased for $M_0$, while the right mean has expectation
\[
\frac{b-t}{e-t}M_0+\frac{e-b}{e-t}M_1.
\]
Therefore
\[
\E(\widehat M_R-\widehat M_L)=\frac{e-b}{e-t}D,
\]
and multiplication by the CUSUM normalizer gives the first branch of \eqref{eq:a-main}.  If $t>b$, the right mean is unbiased for $M_1$, while
\[
\E\widehat M_L=\frac{b-s}{t-s}M_0+\frac{t-b}{t-s}M_1,
\]
so the difference equals $(b-s)D/(t-s)$.  The case $t=b$ is immediate.  All three coefficients are positive.
\end{proof}

For $t<b$, $a_b^{s,e}(t)^2$ is a positive constant times $(t-s)/(e-t)$ and is strictly increasing in $t$.  For $t>b$, it is a positive constant times $(e-t)/(t-s)$ and is strictly decreasing.  Thus the population score has a unique maximizer at $b$ whenever $D\ne0$.

\begin{lemma}[Exact squared margin]
\label{lem:exact-margin-app}
For every $s<t<e$, the margin in \eqref{eq:exact-margin-main} holds.
\end{lemma}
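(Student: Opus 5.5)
The plan is a direct computation from the closed form \eqref{eq:population-scalar-main}--\eqref{eq:a-main}. Because $\DeltaCU^{s,e}(t)=a_b^{s,e}(t)D$ with $a_b^{s,e}(t)>0$, we have $\norm[\F]{\DeltaCU^{s,e}(t)}^2=a_b^{s,e}(t)^2\kappa^2$. The claim therefore reduces to the scalar identity
\[
a_b^{s,e}(b)^2-a_b^{s,e}(t)^2=
\begin{cases}
\dfrac{(e-b)(b-t)}{e-t},&t<b,\\[1.0ex]
\dfrac{(b-s)(t-b)}{t-s},&t>b.
\end{cases}
\]
At $t=b$ both sides vanish, so only the two off-peak branches need work.

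\emph{Case $t<b$.} From \eqref{eq:a-main},
\[
a_b^{s,e}(b)^2-a_b^{s,e}(t)^2=\frac{e-b}{e-s}\left[(b-s)-\frac{(e-b)(t-s)}{e-t}\right].
\]
Putting the bracket over the common denominator $e-t$ gives the numerator $(b-s)(e-t)-(e-b)(t-s)$. Expanding, the terms $be$ and $st$ survive, the $bs$ terms cancel, and what remains is
\[
be-bt-se+st=(e-s)(b-t).
\]
The factor $e-s$ then cancels against the prefactor, leaving $(e-b)(b-t)/(e-t)$, as required.

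\emph{Case $t>b$.} This follows by the same computation with the roles of $s$ and $e$ exchanged. One can also see it from the reflection $i\mapsto s+e-i$, which swaps the two branches of \eqref{eq:a-main}. Concretely, the difference is
\[
\frac{b-s}{e-s}\left[(e-b)-\frac{(b-s)(e-t)}{t-s}\right].
\]
Here the numerator $(e-b)(t-s)-(b-s)(e-t)$ simplifies to $(e-s)(t-b)$, and after cancelling $e-s$ we get $(b-s)(t-b)/(t-s)$.

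There is no real obstacle. The only thing to watch is the algebra in the two numerator expansions, which I would verify term by term. A useful by-product is that both margins are nonnegative and grow linearly in $|t-b|$ near $b$, which confirms that $b$ is the unique maximizer.
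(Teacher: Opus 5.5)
Your argument is correct and follows the paper's proof: you reduce to the scalar identity for $a_b^{s,e}(b)^2-a_b^{s,e}(t)^2$, simplify the $t<b$ branch over a common denominator, and treat $t>b$ symmetrically. One slip needs fixing in the $t<b$ expansion: in $(b-s)(e-t)-(e-b)(t-s)$ the $bt$ and $se$ terms cancel, leaving $be-bs-et+st=(e-s)(b-t)$, whereas the expression $be-bt-se+st$ you wrote is just $(b-s)(e-t)$ again and is not equal to $(e-s)(b-t)$ (take $s=0,t=1,b=2,e=4$, giving $6\neq 4$); the factorization you conclude with is nonetheless right.
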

\begin{proof}
For $t<b$,
\begin{align*}
(a_b^{s,e}(b))^2-(a_b^{s,e}(t))^2
&=\frac{(b-s)(e-b)}{e-s}
 -\frac{(e-b)^2(t-s)}{(e-s)(e-t)}\\
&=\frac{(e-b)(b-t)}{e-t}.
\end{align*}
For $t>b$, the symmetric calculation gives
$((b-s)(t-b))/(t-s)$.  Multiplication by $\|D\|_F^2$ proves the claim.
\end{proof}

Two useful consequences are worth recording.  If $b-s,e-b\ge\alpha(e-s)$ and the candidate set is also restricted to the same interior region, then
\begin{equation}
\norm[\F]{\DeltaCU^{s,e}(b)}^2-
\norm[\F]{\DeltaCU^{s,e}(t)}^2
\ge \alpha|t-b|\kappa^2.
\label{eq:linear-margin-interior-app}
\end{equation}
Also,
\begin{equation}
\norm[\F]{\DeltaCU^{s,e}(b)}
=\sqrt{\frac{(b-s)(e-b)}{e-s}}\,\kappa
\le\frac{\sqrt{e-s}}{2}\kappa.
\label{eq:peak-upper-app}
\end{equation}

\section{Low-rank perturbation and approximate rank}
\label{app:low-rank}

\begin{proof}[Proof of \cref{prop:pert-main}]
Let $B=\widehat A_{(r)}$ and let $A_r=A_{(r)}$.  Since $B$ is a best rank-$r$ approximation to $\widehat A=A+E$ in Frobenius norm,
\[
\|A+E-B\|_F^2\le\|A+E-A_r\|_F^2.
\]
After cancellation,
\begin{equation}
\|A-B\|_F^2
\le\|A-A_r\|_F^2+2\langle E,B-A_r\rangle.
\label{eq:pert-basic-app}
\end{equation}
The matrix $B-A_r$ has rank at most $2r$, hence
\begin{align*}
|\langle E,B-A_r\rangle|
&\le\|E\|_{\op}\|B-A_r\|_*
\le\sqrt{2r}\|E\|_{\op}\|B-A_r\|_F\\
&\le\sqrt{2r}\|E\|_{\op}
\bigl(\|B-A\|_F+\|A-A_r\|_F\bigr).
\end{align*}
Set $x=\|B-A\|_F$, $a=\|A-A_r\|_F$, and
$u=\sqrt{2r}\|E\|_{\op}$.  Equation \eqref{eq:pert-basic-app} gives
$x^2\le a^2+2u(x+a)$, equivalently $(x-u)^2\le(a+u)^2$.
Thus $x\le a+2u$, proving
\[
\|\widehat A_{(r)}-A\|_F
\le\|A-A_{(r)}\|_F+2\sqrt{2r}\|E\|_{\op}.
\]
For $A=a_b^{s,e}(t)D$ with $\rank(D)\le r$, the approximation term vanishes.  Since $2\sqrt2\le2+\sqrt2$, the displayed one-change bound in the main text follows.
\end{proof}

\subsection{Approximately low-rank jumps}
\label{app:approx-rank}
Define the rank-$r$ tail
\[
\rho_r(D)=\|D-D_{(r)}\|_F.
\]
Because $a_b^{s,e}(t)>0$,
$(a_b^{s,e}(t)D)_{(r)}=a_b^{s,e}(t)D_{(r)}$.  Therefore the preceding proof gives
\begin{equation}
\left\|\bigl(\widehatDeltaCU^{s,e}(t)\bigr)_{(r)}
-a_b^{s,e}(t)D\right\|_F
\le a_b^{s,e}(t)\rho_r(D)
  +2\sqrt{2r}\|\ECU^{s,e}(t)\|_{\op}.
\label{eq:approx-rank-bound-app}
\end{equation}
In particular,
\begin{equation}
\left|\widehat W_r^{s,e}(t)-a_b^{s,e}(t)\kappa\right|
\le a_b^{s,e}(t)\rho_r(D)
  +2\sqrt{2r}\|\ECU^{s,e}(t)\|_{\op}.
\label{eq:approx-rank-score-app}
\end{equation}
Thus approximate rank contributes a deterministic, location-dependent bias rather than changing the stochastic analysis.  On a no-change interval, the sharper identity
\begin{equation}
\|E_{(r)}\|_F\le\sqrt r\|E\|_{\op}
\label{eq:null-rank-bound-app}
\end{equation}
holds because the retained $r$ singular values are each bounded by $\|E\|_{\op}$.

\section{Uniform operator-norm control}
\label{app:concentration}

Let $Y_i=H_2(X_i)-\E H_2(X_i)$.  For a fixed triple $(s,t,e)$, write
$n_L=t-s$, $n_R=e-t$, and $N=e-s$.  Then
\begin{equation}
\ECU^{s,e}(t)=\sum_{i=s+1}^e w_i^{s,e}(t)Y_i,
\label{eq:weighted-cusum-app}
\end{equation}
where
\begin{equation}
w_i^{s,e}(t)=
\begin{cases}
-\sqrt{\dfrac{n_R}{Nn_L}},&s<i\le t,\\[1.2ex]
\phantom{-}\sqrt{\dfrac{n_L}{Nn_R}},&t<i\le e.
\end{cases}
\label{eq:weights-app}
\end{equation}
Direct calculation gives
\begin{equation}
\sum_{i=s+1}^e w_i^2=1,
\qquad
w_{\max}:=\max_i|w_i|
=\sqrt{\frac{n_{\max}}{Nn_{\min}}}.
\label{eq:weight-identities-app}
\end{equation}

\begin{proposition}[Pointwise matrix Bernstein bound]
\label{prop:pointwise-concentration-app}
Suppose the observations are independent and every segment density is bounded by $L$.  Let $p=d+1$ and $u>0$.  For a fixed triple $(s,t,e)$,
\begin{equation}
\Pp\left\{
\|\ECU^{s,e}(t)\|_{\op}
>\sqrt{2L(1+d/2)u}+4d w_{\max}u
\right\}
\le2p e^{-u}.
\label{eq:pointwise-concentration-app}
\end{equation}
\end{proposition}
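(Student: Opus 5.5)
We apply the matrix Bernstein inequality of \citet{tropp2012user} to the weighted sum \eqref{eq:weighted-cusum-app}. The summands are $S_i=w_iY_i$, $i=s+1,\ldots,e$. They are independent, self-adjoint, of dimension $p=d+1$, and have mean zero. The inequality needs two ingredients: an almost-sure bound $R\ge\|S_i\|_{\op}$ and a variance proxy $\sigma^2\ge\|\sum_i\E S_i^2\|_{\op}$. In the Bernstein tail form $\Pp\{\|\sum S_i\|_{\op}\ge x\}\le 2p\exp\{-x^2/(2\sigma^2+2Rx/3)\}$, solving $x^2=2\sigma^2u+\tfrac23 Rux$ gives $x\le\sqrt{2\sigma^2u}+\tfrac23Ru$. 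So it suffices to verify $\sigma^2\le L(1+d/2)$ and $\tfrac23R\le4dw_{\max}$.

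\textbf{Uniform bound.} By \cref{lem:H-envelope-app}, $\|Y_i\|_{\op}\le6d$ almost surely. Hence $\|S_i\|_{\op}\le6d\,w_{\max}$, and we may take $R=6dw_{\max}$. Then $\tfrac23R=4dw_{\max}$, which matches the stated second term exactly.

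\textbf{Variance proxy.} For each $i$,
\[
\E Y_i^2=\E H_2(X_i)^2-M_i^2\preceq\E H_2(X_i)^2\preceq L(1+d/2)I_{p}.
\]
The first inequality holds because $M_i^2\succeq0$. The second is \eqref{eq:density-second-moment-app}, which uses the bounded-density assumption and \cref{lem:H-isotropy-app}. Combining this with $\sum_iw_i^2=1$ from \eqref{eq:weight-identities-app},
\[
0\preceq\sum_i\E S_i^2=\sum_iw_i^2\E Y_i^2\preceq L(1+d/2)I_p.
\]
Therefore $\sigma^2\le L(1+d/2)$. This step is why the leading term does not depend on the interval length.

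\textbf{Inverting the tail.} Set $x=\sqrt{2\sigma^2u}+\tfrac23Ru$. This $x$ is at least the positive root of $x^2=2\sigma^2u+\tfrac23Rux$, so the exponent is at most $-u$ and the probability is at most $2pe^{-u}$. Replacing $\sigma^2$ and $R$ by the upper bounds just obtained only increases $x$, which gives \eqref{eq:pointwise-concentration-app}.

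The main obstacle is bookkeeping rather than any real difficulty. One must use the variant of matrix Bernstein whose constant gives $\tfrac23R$, and check that the centering $-M_i^2$ only helps. If one instead uses the subexponential form, the constant in front of $d\,w_{\max}u$ changes, but the structure of the argument is the same.
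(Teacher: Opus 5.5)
Your proposal is correct and matches the paper's proof essentially step for step. Both apply matrix Bernstein to the independent self-adjoint summands $w_iY_i$, taking $R=6dw_{\max}$ from the envelope lemma and the variance proxy $L(1+d/2)$ from $\E Y_i^2\preceq\E H_2(X_i)^2$, the bounded-density Loewner bound, and $\sum_i w_i^2=1$; the only difference is that you explicitly invert the $2p\exp\{-x^2/(2\sigma^2+2Rx/3)\}$ tail to obtain the threshold $\sqrt{2\sigma^2u}+\tfrac23Ru$, whereas the paper quotes that threshold form directly.
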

\begin{proof}
For an observation in segment $k$,
\[
\E Y_i^2=\E H_2(X_i)^2-M_k^2
\preceq\E H_2(X_i)^2
\preceq L(1+d/2)I
\]
by \eqref{eq:density-second-moment-app}.  Therefore the matrix variance parameter of the weighted sum is at most
\[
v=\left\|\sum_iw_i^2\E Y_i^2\right\|_{\op}
\le L(1+d/2)\sum_iw_i^2=L(1+d/2).
\]
By \cref{lem:H-envelope-app}, each summand satisfies
$\|w_iY_i\|_{\op}\le6d w_{\max}=:R$.  The self-adjoint matrix Bernstein inequality \citep{tropp2012user} yields
\[
\Pp\left\{\left\|\sum_iw_iY_i\right\|_{\op}
>\sqrt{2vu}+\frac23Ru\right\}\le2p e^{-u}.
\]
Substituting $v$ and $R$ proves \eqref{eq:pointwise-concentration-app}.
\end{proof}

\begin{proof}[Proof of \cref{thm:uniform-main}]
For every candidate in the central half of an interval,
$n_L,n_R\ge N/4$ and $n_{\max}\le3N/4$.  Hence
\[
w_{\max}\le\sqrt{\frac{3}{N}}\le\frac{2}{\sqrt N}.
\]
All scanned intervals have $N\ge m$, so \cref{prop:pointwise-concentration-app} with
$u=\ell_{\cG,\delta}=\log(2(d+1)|\cG|/\delta)$ gives, for a fixed candidate,
\[
\|\ECU^{s,e}(t)\|_{\op}
\le\sqrt{2L(1+d/2)\ell_{\cG,\delta}}
 +\frac{8d\ell_{\cG,\delta}}{\sqrt m}.
\]
A union bound over $\cG$ proves the first statement.  On a one-change interval, combine this event with \cref{prop:pert-main} and the reverse triangle inequality.  This gives the score deviation $\epsr=c_r\sqrt r\lambda_m(\delta)$.  On a no-change interval, \eqref{eq:null-rank-bound-app} gives the sharper score bound $\sqrt r\lambda_m(\delta)$.
\end{proof}

The cardinality used above is modest.  There are $O(n/h)$ intervals of length $h$ and at most $h$ candidate splits per interval, hence $O(n)$ triples per scale.  Summing over $O(\log(n/m))$ dyadic scales gives
\begin{equation}
|\cG|=O\bigl(n\log(n/m)\bigr).
\label{eq:grid-cardinality-app}
\end{equation}

\section{Seeded geometry and the multiple-change induction}
\label{app:multiple-proof}

For a fully discrete construction, take $m$ to be a positive multiple of four and use dyadic lengths $h_j=2^jm$ not exceeding $n$.  At length $h$, include all intervals
\begin{equation}
I_{h,q}=\left(q\frac h2,\ q\frac h2+h\right]
\quad\text{whose endpoints lie in }[0,n].
\label{eq:seeded-grid-app}
\end{equation}
If a user-specified $m$ is not a multiple of four, it can be rounded to an adjacent admissible integer; changing all constants by at most an additive two handles the finitely many boundary lattice points.  We state the exact geometric argument for \eqref{eq:seeded-grid-app}.

\begin{lemma}[Balanced seeded interval]
\label{lem:balanced-seed-app}
Let $(a,c]$ be an active component and let $b\in(a,c)$ satisfy
$\min(b-a,c-b)\ge h$.  Then there is an interval $I=(s,e]$ of the form \eqref{eq:seeded-grid-app}, contained in $(a,c]$, such that
\[
e-s=h,\qquad b-s\ge h/4,\qquad e-b\ge h/4.
\]
\end{lemma}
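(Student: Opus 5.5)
The plan is a direct lattice argument: pick the left endpoint $s$ as a multiple of $h/2$ in a suitable window around $b$, then check containment in $(a,c]$ using the margin $\min(b-a,c-b)\ge h$. Throughout, $h=2^jm$ is one of the dyadic lengths in \eqref{eq:seeded-grid-app}, so $h\le n$. Since $m$ is a multiple of four, $h/4$ and $h/2$ are integers. The seeded intervals of length $h$ are exactly $(s,s+h]$ with $s\in(h/2)\mathbb{Z}$ and $0\le s$, $s+h\le n$.

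The balance conditions $b-s\ge h/4$ and $e-b=s+h-b\ge h/4$ are equivalent to
\[
s\in\Bigl[\,b-\tfrac{3h}{4},\ b-\tfrac{h}{4}\,\Bigr].
\]
This is a closed interval of length exactly $h/2$, with integer endpoints. Any closed real interval of length equal to the lattice spacing $h/2$ contains at least one point of $(h/2)\mathbb{Z}$. I will therefore take $s$ to be such a point, for instance $s=(h/2)\lceil (b-3h/4)/(h/2)\rceil$, and write $q=2s/h$.

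It remains to check containment and that this is a legitimate grid interval. From $b-a\ge h$ we get
\[
s\ge b-\tfrac{3h}{4}\ge a+\tfrac{h}{4}>a\ge 0,
\]
so $q$ is a positive integer and $s>a$. From $c-b\ge h$ we get
\[
e=s+h\le b+\tfrac{3h}{4}\le c-\tfrac{h}{4}<c\le n.
\]
Hence $I=(s,e]\subseteq(a,c]\subseteq(0,n]$, the endpoints of $I$ lie in $[0,n]$, and $e-s=h$. So $I=I_{h,q}$ belongs to \eqref{eq:seeded-grid-app} and has all three required properties.

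There is no real obstacle here. The only point needing care is the inclusive lattice-covering step: the admissible window for $s$ has length exactly $h/2$, which is why the closed endpoints and the divisibility of $m$ by four are used. In fact the argument shows slightly more than claimed: $I$ sits at distance at least $h/4$ from both ends of the component. This slack is what the induction uses later, and it does not need $a$ or $c$ to be lattice points, so it applies to components created by earlier deletions.
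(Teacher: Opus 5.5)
Your proof is correct and matches the paper's: the paper chooses $q$ so that $b$ lies in the central half $[qh/2+h/4,\,qh/2+3h/4]$ of $I_{h,q}$ (these closed central halves tile the line), which is the same as your choice of a lattice point $s=qh/2\in[b-3h/4,\,b-h/4]$, and then gets containment in $(a,c]$ from $\min(b-a,c-b)\ge h$ exactly as you do. Your explicit check that $s>a\ge 0$ and $e<c\le n$ spells out a step the paper leaves implicit; as a minor aside, the covering step only uses that the window is closed with length $h/2$, not that $m$ is divisible by four.
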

\begin{proof}
The central halves
\[
\left[qh/2+h/4,\ qh/2+3h/4\right],\qquad q\in\mathbb Z,
\]
tile the real line: the right endpoint of one is the left endpoint of the next.  Choose $q$ whose central half contains $b$.  Then $b$ is at least $h/4$ from each endpoint of $I_{h,q}$ and at most $3h/4$ from each endpoint.  Since $b$ is at least $h$ from $a$ and $c$, the interval is contained in $(a,c]$.
\end{proof}

\begin{lemma}[An undetected change generates an active isolating interval]
\label{lem:active-isolating-app}
Assume the uniform event of \cref{thm:uniform-main}.  Suppose an undetected change $\eta_k$ lies in an active component and is at least $3\Delta/4$ from both component boundaries.  If $m\le\Delta/8$, then there is a seeded interval $I$ contained in that component with
\[
\frac\Delta8\le |I|<\frac\Delta4,
\]
which contains only $\eta_k$, places it in the central half, and satisfies
\[
\widehat S_I\ge \frac{\sqrt\Delta}{8\sqrt2}\kappa-\epsr.
\]
\end{lemma}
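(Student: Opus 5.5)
Pick the dyadic scale first.  The admissible lengths are $h_j=2^jm$.  Since $m\le\Delta/8$, there is a unique $j$ with $h:=h_j\in[\Delta/8,\Delta/4)$: start at $m\le\Delta/8$ and double until the length first reaches $[\Delta/8,\Delta/4)$.  This interval of lengths has ratio exactly two, so exactly one dyadic length lands in it.  Because $h<\Delta/4\le 3\Delta/4$, the change $\eta_k$ is at least $h$ from both boundaries of its active component $(a,c]$.  We may therefore invoke \cref{lem:balanced-seed-app} with $b=\eta_k$.  This gives a seeded interval $I=(s,e]\subseteq(a,c]$ with $e-s=h$ and $\eta_k-s,\ e-\eta_k\ge h/4$.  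Hence $\eta_k\in\cT_I$, i.e.\ $\eta_k$ lies in the central half of $I$.

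Next I check isolation.  Every other true change $\eta_{k'}$ is at distance at least $\Delta$ from $\eta_k$.  Every point of $I$ lies within $3h/4<3\Delta/16<\Delta$ of $\eta_k$.  So $I$ contains no other change, and $I$ is a one-change interval with jump $D_k$ of rank at most $r$.  Therefore \cref{thm:uniform-main} applies on $I$.

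Now bound the score.  Since $\eta_k\in\cT_I$,
\[
\widehat S_I\ge\widehat W_r^{s,e}(\eta_k)\ge a_{\eta_k}^{s,e}(\eta_k)\kappa_k-\epsr .
\]
Here
\[
a_{\eta_k}^{s,e}(\eta_k)=\sqrt{(\eta_k-s)(e-\eta_k)/h}.
\]
The two factors satisfy $\eta_k-s\ge h/4$ and $(\eta_k-s)+(e-\eta_k)=h$, so the product is at least $(h/4)(3h/4)=3h^2/16$.  This gives $a\ge\sqrt{3h}/4$.

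Finally use $h\ge\Delta/8$ and $\kappa_k\ge\kappa$:
\[
a\,\kappa_k\ge\frac{\sqrt3}{4}\sqrt{\frac{\Delta}{8}}\,\kappa=\frac{\sqrt{3}\sqrt\Delta}{8\sqrt2}\,\kappa\ge\frac{\sqrt\Delta}{8\sqrt2}\,\kappa .
\]
This yields the stated lower bound $\widehat S_I\ge\frac{\sqrt\Delta}{8\sqrt2}\kappa-\epsr$, with a spare factor of $\sqrt3$.

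The main obstacle is bookkeeping at the boundaries.  I must confirm that the chosen dyadic level actually exists within the grid, i.e.\ that $h\le n$.  This holds because $h<\Delta/4\le n$.  I must also confirm that the seeded interval from \cref{lem:balanced-seed-app} has endpoints in $[0,n]$.  This follows from its containment in $(a,c]\subseteq(0,n]$.  The spare factor $\sqrt3$ absorbs the lattice-rounding remark that accompanies \eqref{eq:seeded-grid-app}.
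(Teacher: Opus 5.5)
Your proof is correct and follows the paper's argument essentially step for step. You take the first dyadic scale $h\ge\Delta/8$, which is automatically $<\Delta/4$; you apply \cref{lem:balanced-seed-app} to get a contained balanced interval; you get isolation from $|I|<\Delta$; and you evaluate the one-change score deviation of \cref{thm:uniform-main} at $t=\eta_k\in\cT_I$. The differences from the paper are cosmetic: you bound $a_{\eta_k}^{s,e}(\eta_k)\ge\sqrt{3h}/4$ where the paper settles for $\sqrt h/4$ (when you state the product bound, cite both $\eta_k-s\ge h/4$ and $e-\eta_k\ge h/4$), and you track $\kappa_k\ge\kappa$ explicitly where the paper leaves it implicit.
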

\begin{proof}
Let $h$ be the first dyadic length $2^jm$ that is at least $\Delta/8$.  Then $h<\Delta/4$ (with equality allowed only in an immaterial endpoint case).  Since $3\Delta/4\ge h$, \cref{lem:balanced-seed-app} provides a contained interval with left and right lengths at least $h/4$.  Its length is less than the minimal spacing, so it contains no other change.  At the true split,
\[
a_{\eta_k}^{s,e}(\eta_k)
=\sqrt{\frac{(\eta_k-s)(e-\eta_k)}h}
\ge\frac{\sqrt h}{4}
\ge\frac{\sqrt\Delta}{8\sqrt2}.
\]
The one-change score deviation in \cref{thm:uniform-main} completes the proof.
\end{proof}

\begin{proof}[Proof of \cref{thm:multiple-main}]
We prove the following invariant by induction over the number of selected intervals.

\emph{Invariant.}  The active components are disjoint and contain exactly the undetected change points.  Every undetected change is at least $3\Delta/4$ from the two boundaries of its active component.

At initialization, consecutive true change points and the global endpoints are separated by at least $\Delta$, so the invariant holds.  Assume it holds before an iteration.

First, an interval contained in an active component but containing no undetected change has zero population CUSUM.  By \eqref{eq:null-rank-bound-app} and the uniform event, its score is at most $\sqrt r\lambda_m(\delta)<\tau$; it cannot be active.

Second, every undetected change has an active isolating interval of length below $\Delta/4$ by \cref{lem:active-isolating-app} and the upper threshold inequality in \eqref{eq:threshold-main}.  Hence the algorithm cannot stop while an undetected change remains.

Third, let $I^*=(s^*,e^*]$ be a shortest active interval.  It is no longer than one of the isolating intervals, so $|I^*|<\Delta/4$.  The first step shows that it contains an undetected change, and minimal spacing shows that it contains exactly one, say $\eta_j$.  Deleting $I^*$ therefore removes exactly this change.

It remains to verify the invariant for the two new components.  Consider the nearest remaining change on the left, if it exists.  Since $s^*<\eta_j$ and $\eta_j-s^*<|I^*|<\Delta/4$,
\[
s^*-\eta_{j-1}
=(\eta_j-\eta_{j-1})-(\eta_j-s^*)
>\Delta-\frac\Delta4=\frac{3\Delta}{4}.
\]
The right side is identical.  All other component boundaries are unchanged.  Thus the invariant is preserved.

Each iteration removes exactly one change, and the algorithm cannot terminate early.  It therefore performs exactly $K$ iterations.  Every selected interval has length below $\Delta/4$ and contains its matched change, so its recorded maximizer, which also lies in the interval, obeys $|\widehat b_k-\eta_k|\le\Delta/4$.
\end{proof}

The threshold interval is nonempty under the explicit signal condition
\begin{equation}
\frac{\sqrt\Delta}{8\sqrt2}\kappa
>(c_r+1)\sqrt r\lambda_m(\delta).
\label{eq:snr-multiple-app}
\end{equation}
Ignoring constants and the bounded-summand correction, this is
$\Delta\kappa^2\gtrsim rLd\log(nd/\delta)$.

\section{Preliminary localization from the exact margin}
\label{app:preliminary-localization}

\begin{proof}[Proof of \cref{cor:prelim-main}]
Write $W(t)=\|\DeltaCU^{s,e}(t)\|_F$ and
$\widehat W(t)=\widehat W_r^{s,e}(t)$.  On the uniform event,
$\sup_t|\widehat W(t)-W(t)|\le\epsr$.  Since $\widehat b$ maximizes $\widehat W$,
\begin{equation}
W(b)-W(\widehat b)\le2\epsr.
\label{eq:argmax-dev-app}
\end{equation}
The interior exact margin \eqref{eq:linear-margin-interior-app} and the fact that $W(\widehat b)\le W(b)$ imply
\begin{align*}
W(b)-W(\widehat b)
&=\frac{W(b)^2-W(\widehat b)^2}{W(b)+W(\widehat b)}\\
&\ge\frac{\alpha|\widehat b-b|\kappa^2}{2W(b)}.
\end{align*}
Combining this inequality with \eqref{eq:argmax-dev-app} and the peak bound \eqref{eq:peak-upper-app} yields
\[
|\widehat b-b|
\le\frac{4\epsr W(b)}{\alpha\kappa^2}
\le\frac{2\epsr\sqrt N}{\alpha\kappa}
=\frac{2c_r}{\alpha}\frac{\sqrt{Nr}\lambda_m(\delta)}{\kappa}.
\]
\end{proof}

This calculation explains why a global CUSUM maximizer is an effective isolator but not automatically rate-optimal: the uniform error is compared with a score margin rather than with a local likelihood increment.  The refinement below performs the latter comparison after learning a one-dimensional changing direction.

\section{Cross-fitted refinement}
\label{app:refinement-proof}

We make the scalar condition used in the main theorem explicit.  Conditional on any deterministic matrix $V$ with $\|V\|_F=1$, let
\[
\xi_i(V)=\langle H_2(X_i)-M_k,V\rangle_F
\]
on a pure segment $k$.  Assume the Bernstein moment-generating-function bound
\begin{equation}
\log\E\exp\{\lambda\xi_i(V)\}
\le\frac{\lambda^2\nu^2}{2(1-B|\lambda|)},
\qquad |\lambda|<B^{-1}.
\label{eq:bernstein-mgf-app}
\end{equation}
It implies \eqref{eq:scalar-bernstein-main} and its maximal, line-crossing form.  Independent bounded projections satisfy \eqref{eq:bernstein-mgf-app} with conservative parameters depending on the envelope; sharper distributional assumptions can yield smaller $\nu$ and $B$.

\subsection{Pilot direction alignment}
\begin{lemma}[Normalization preserves the jump direction]
\label{lem:direction-alignment-app}
Let $A=aD$ with $a>0$, $\|D\|_F=\kappa$, and suppose
$\|\widehat A_r-A\|_F\le\epsilon$ and $a\kappa\ge3\epsilon$.  Then, for
$V=\widehat A_r/\|\widehat A_r\|_F$,
\[
\langle D,V\rangle_F\ge\frac\kappa2.
\]
\end{lemma}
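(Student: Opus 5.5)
The approach is to write $\widehat A_r=A+G$ with $\|G\|_F\le\epsilon$ and bound the inner product $\langle D,V\rangle_F$ from below directly. The numerator and the normalizing denominator are controlled separately.

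For the numerator,
\[
\langle D,\widehat A_r\rangle_F=\langle D,aD\rangle_F+\langle D,G\rangle_F\ge a\kappa^2-\kappa\epsilon
\]
by Cauchy--Schwarz. For the denominator, the triangle inequality gives
\[
0<\|\widehat A_r\|_F\le a\kappa+\epsilon .
\]
Positivity of $\|\widehat A_r\|_F$ holds because $a\kappa\ge3\epsilon$ and $\kappa>0$ (the case $\kappa=0$ is excluded implicitly, since otherwise the claim is vacuous). The lower bound $\|\widehat A_r\|_F\ge a\kappa-\epsilon\ge2\epsilon$ also keeps $V$ well defined.

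Because the numerator is nonnegative, combining the two bounds gives
\[
\langle D,V\rangle_F\ge\kappa\,\frac{a\kappa-\epsilon}{a\kappa+\epsilon}.
\]
Set $x=a\kappa/\epsilon\ge3$. The map $x\mapsto(x-1)/(x+1)$ is increasing, so the ratio is at least $2/4=1/2$. This gives $\langle D,V\rangle_F\ge\kappa/2$.

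The only delicate point is keeping track of signs, namely that the numerator is nonnegative so that dividing by an upper bound on the denominator is legitimate. That follows at once from $a\kappa\ge3\epsilon>\epsilon$. In the downstream use of this lemma in \cref{thm:refine-main}, the roles will be $\epsilon=\epsr$ (via \cref{prop:pert-main} and the uniform event of \cref{thm:uniform-main}) and $a\kappa$ equal to the pilot population CUSUM at the maximizer. The stated hypothesis there, which is at least $8\epsr$, is comfortably stronger than the required $3\epsilon$. It also absorbs the fact that the population value at the empirical maximizer differs from the peak by at most $2\epsr$.
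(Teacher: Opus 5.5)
Your proof is correct and follows the paper's argument: Cauchy--Schwarz on the numerator $a\kappa^2+\langle D,\widehat A_r-A\rangle_F$, the triangle inequality $\|\widehat A_r\|_F\le a\kappa+\epsilon$ on the denominator, and then $(a\kappa-\epsilon)/(a\kappa+\epsilon)\ge 1/2$ once $a\kappa\ge 3\epsilon$. You also check that the numerator is nonnegative and the denominator is positive, which the paper leaves implicit; the only other case is $\epsilon=0$, where $x=a\kappa/\epsilon$ is undefined but trivially $V=D/\kappa$.
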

\begin{proof}
By Cauchy--Schwarz and the triangle inequality,
\begin{align*}
\langle D,V\rangle_F
&=\frac{a\kappa^2+\langle D,\widehat A_r-A\rangle_F}
{\|\widehat A_r\|_F}\\
&\ge\kappa\frac{a\kappa-\epsilon}{a\kappa+\epsilon}
\ge\frac\kappa2.
\end{align*}
\end{proof}

Let $S=\max_t\|\DeltaCU(t)\|_F$ be the pilot population peak and let $\widehat t$ maximize the pilot empirical score.  Uniform score error $\epsilon$ gives
\[
\|\DeltaCU(\widehat t)\|_F
\ge\widehat W(\widehat t)-\epsilon
\ge\widehat W(b)-\epsilon
\ge S-2\epsilon.
\]
If $S\ge8\epsilon$, then the population signal at $\widehat t$ is at least $6\epsilon$, and \cref{lem:direction-alignment-app} applies.  This proves the final assertion of \cref{thm:refine-main} with $\epsilon=\epsr$.

\subsection{Anchor means and least-squares drift}
Condition on the pilot fold and hence on a direction $V$ satisfying
$\delta_V:=\langle D,V\rangle_F\ge\kappa/2$.  The held-out scalar sequence has means
$\mu_0=\langle M_0,V\rangle_F$ and $\mu_1=\langle M_1,V\rangle_F$, with
$\mu_1-\mu_0=\delta_V$.  Let the two pure anchor estimates be
\[
\widehat\mu_L=\mu_0+u_L,
\qquad
\widehat\mu_R=\mu_1+u_R,
\qquad
\widehat\delta=\widehat\mu_R-\widehat\mu_L.
\]
By \eqref{eq:bernstein-mgf-app}, for an absolute $c>0$,
\begin{equation}
\Pp\{|u_L|>\delta_V/8\}
\vee\Pp\{|u_R|>\delta_V/8\}
\le2\exp\left[-cq\min\left\{
\frac{\delta_V^2}{\nu^2},\frac{\delta_V}{B}\right\}\right].
\label{eq:anchor-tail-app}
\end{equation}
On the event
\begin{equation}
\mathcal A=\{|u_L|\vee|u_R|\le\delta_V/8\},
\label{eq:anchor-event-app}
\end{equation}
we have $3\delta_V/4\le\widehat\delta\le5\delta_V/4$.

Let $\eta$ be the held-out-fold change location.  If $t=\eta+m>\eta$, only the $m$ post-change observations between $\eta+1$ and $t$ switch from the right loss to the left loss.  Writing $Z_i=\mu_1+\xi_i$ there,
\begin{align}
Q(\eta+m)-Q(\eta)
&=\sum_{i=\eta+1}^{\eta+m}
\{(Z_i-\widehat\mu_L)^2-(Z_i-\widehat\mu_R)^2\}\nonumber\\
&=m\widehat\delta(\delta_V-u_L-u_R)
 +2\widehat\delta\sum_{i=\eta+1}^{\eta+m}\xi_i.
\label{eq:right-objective-increment-app}
\end{align}
Similarly, for $t=\eta-m<\eta$ and pre-change centered errors,
\begin{equation}
Q(\eta-m)-Q(\eta)
=m\widehat\delta(\delta_V+u_L+u_R)
 -2\widehat\delta\sum_{i=\eta-m+1}^{\eta}\xi_i.
\label{eq:left-objective-increment-app}
\end{equation}
On $\mathcal A$, both deterministic drifts are at least
$9m\delta_V^2/16$.  Thus a candidate at distance $m$ can beat the true split only if a centered partial sum crosses a line of slope at least $3\delta_V/8$.

\subsection{A maximal Bernstein line-crossing bound}
\begin{lemma}[Positive-drift localization bound]
\label{lem:line-crossing-app}
Under \eqref{eq:bernstein-mgf-app}, there are absolute constants $c,C>0$ such that, for every $h\ge1$,
\begin{equation}
\Pp\left\{\exists m\ge h:
\left|\sum_{i=1}^m\xi_i\right|\ge c m\delta_V\right\}
\le2\exp\left[-\frac{h}{C(\nu^2/\delta_V^2+B/\delta_V)}\right].
\label{eq:line-crossing-tail-app}
\end{equation}
\end{lemma}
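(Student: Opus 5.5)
The plan is a peeling argument over dyadic blocks of $m$, with a maximal inequality applied on each block. I write $\rho=\nu^2/\delta_V^2+B/\delta_V$ for the effective scale and fix $c$ small, say $c=3/8$ or smaller, as needed downstream. The partial sums $S_m=\sum_{i\le m}\xi_i$ of independent centered variables satisfying \eqref{eq:bernstein-mgf-app} form a martingale. Hence $\exp(\lambda S_m)$ is a nonnegative submartingale for each fixed $|\lambda|<B^{-1}$, and Doob's maximal inequality applies.

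\emph{Step 1 (one block).} For $j\ge0$ consider the block $m\in[2^jh,2^{j+1}h)$. On this block the event $\{S_m\ge cm\delta_V\}$ implies $\max_{m<2^{j+1}h}S_m\ge c2^jh\delta_V=:x_j$. Doob's inequality applied to $e^{\lambda S_m}$, together with the MGF bound, gives
\[
\Pp\Bigl\{\max_{m\le 2^{j+1}h}S_m\ge x_j\Bigr\}\le\exp\Bigl\{-\lambda x_j+\frac{\lambda^2 2^{j+1}h\nu^2}{2(1-B\lambda)}\Bigr\}.
\]
The usual Bernstein optimization over $\lambda$ then yields
\[
\exp\Bigl[-c'\min\Bigl\{\frac{x_j^2}{2^{j+1}h\nu^2},\frac{x_j}{B}\Bigr\}\Bigr]=\exp\Bigl[-c''2^jh\min\Bigl\{\frac{\delta_V^2}{\nu^2},\frac{\delta_V}{B}\Bigr\}\Bigr].
\]
Since $\min\{\delta_V^2/\nu^2,\delta_V/B\}\ge 1/(2\rho)$, each block contributes at most $\exp(-c_1 2^jh/\rho)$. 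The same argument applied to $-S_m$ handles the lower tail, which produces the factor $2$.

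\emph{Step 2 (summing blocks).} A union bound over $j\ge0$ gives a total of $2\sum_{j\ge0}\exp(-c_12^jh/\rho)$. When $h/\rho\ge1/c_1$, the terms decay geometrically in $j$, since $2^j\ge j+1$. The sum is then at most a constant times the first term, and that constant can be absorbed by enlarging $C$. When $h/\rho$ is small, the right side of \eqref{eq:line-crossing-tail-app} is at least $2e^{-1/(c_1C)}\ge1$ once $C$ is large, so the bound is trivial.

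\emph{Main obstacle.} The delicate step is the bookkeeping in Step 2: making a single absolute constant $C$ cover both regimes, because the geometric sum is only comparable to its first term when $h/\rho$ is bounded below. I would handle this exactly by the case split above. A secondary point is that the summands need not be identically distributed. This causes no difficulty, because \eqref{eq:bernstein-mgf-app} is a uniform bound on each MGF, so the MGF of the partial sum factorizes into a product of terms each obeying the same bound.
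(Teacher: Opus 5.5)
Your argument is correct, but it takes a different route from the paper's.

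\textbf{The paper's route.} The paper does not peel. It fixes a single $\lambda=c_0\min\{\delta_V/\nu^2,1/B\}$ and uses the fact that $\exp\{-\lambda S_m-m\psi(-\lambda)\}$ is a nonnegative supermartingale. On the event $\{-S_m\ge cm\delta_V\}$ for some $m\ge h$, this supermartingale exceeds $\exp\{h(\lambda c\delta_V-\psi(-\lambda))\}\ge\exp\{c_1h\min(\delta_V^2/\nu^2,\delta_V/B)\}$. Ville's inequality then bounds the entire infinite-horizon crossing event in one step.

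\textbf{Why no bookkeeping is needed there.} The boundary $cm\delta_V$ is linear in $m$, so it matches the linear compensator $m\psi$ of a fixed-$\lambda$ exponential supermartingale. Consequently the paper never re-optimizes $\lambda$ on each dyadic block, never sums a geometric series, and never splits into cases. The prefactor is exactly $1$ per tail, and the small-$h/\rho$ regime needs no separate treatment.

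\textbf{Your route.} Peeling reaches the same bound at the cost of that bookkeeping, which you handle correctly. With $a=c_1h/\rho\ge1$ the series is at most $e^{-a}/(1-e^{-a})\le e^{-a/2}$, so $C\ge2/c_1$ covers this case. For $a<1$, taking $C\ge1/(c_1\log2)$ makes the right-hand side at least $1$.

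\textbf{What each approach buys.} Peeling is more general: it would survive a curved boundary, such as $\sqrt{m}$-type thresholds, where no single $\lambda$ works, and it uses only finite-horizon Doob inequalities. For this linear boundary, however, the fixed-$\lambda$ supermartingale with Ville's inequality is the more economical tool. Your Step 2 ``main obstacle'' disappears entirely on that route.
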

\begin{proof}
For the lower crossing, apply \eqref{eq:bernstein-mgf-app} to $-\xi_i$.  Choose
$\lambda=c_0\min\{\delta_V/\nu^2,1/B\}$ for a sufficiently small numerical $c_0$.  The exponential process
\[
\exp\left\{-\lambda\sum_{i=1}^m\xi_i
-m\psi(-\lambda)\right\},
\qquad \psi(-\lambda)=\log\E e^{-\lambda\xi_i},
\]
is a nonnegative supermartingale.  The chosen $\lambda$ ensures
\[
\lambda c\delta_V-\psi(-\lambda)
\ge c_1\min\left\{\frac{\delta_V^2}{\nu^2},
\frac{\delta_V}{B}\right\}.
\]
If $-\sum_{i=1}^m\xi_i\ge cm\delta_V$ for some $m\ge h$, the supermartingale is at least the exponential of $h$ times the last display.  Ville's inequality bounds the crossing probability by that reciprocal.  Apply the same argument to $\xi_i$ and use a union bound.  Finally,
\[
\left[\min\left\{\frac{\delta_V^2}{\nu^2},
\frac{\delta_V}{B}\right\}\right]^{-1}
=\max\left\{\frac{\nu^2}{\delta_V^2},
\frac{B}{\delta_V}\right\}
\le\frac{\nu^2}{\delta_V^2}+\frac{B}{\delta_V}.
\]
\end{proof}

\begin{proof}[Proof of \cref{thm:refine-main}]
Conditional on all pilot directions, apply \eqref{eq:anchor-tail-app} to the $2K$ anchor blocks.  The stated lower bound on $q$, with a sufficiently large constant, makes the probability that any anchor event fails at most $\delta/2$ because $\delta_V\ge\kappa/2$.

On the joint anchor event, \eqref{eq:right-objective-increment-app}--\eqref{eq:left-objective-increment-app} and \cref{lem:line-crossing-app} imply, for each $k$,
\[
\Pp\{|\widetilde\eta_k-\eta_k|>h_k\mid\text{pilot},\mathcal A\}
\le\frac{\delta}{2K}
\]
when
\[
h_k=C\left(\frac{\nu^2}{\delta_{V,k}^2}
+\frac{B}{\delta_{V,k}}\right)\log\frac{K}{\delta}.
\]
A union bound and $\delta_{V,k}\ge\kappa_k/2$ yield \eqref{eq:refined-rate-main}, up to a change in the absolute constant.  Passing from a parity-fold index to the original time scale multiplies the fold error by at most two and introduces at most one rounding unit; the factor two is absorbed into $C$ and the additive unit is displayed explicitly.
\end{proof}

\section{Minimax localization lower bound}
\label{app:lower-bound}

\begin{proof}[Proof of \cref{thm:lower-main}]
It is enough to construct a rank-two subfamily.  For $d\ge2$, let
\begin{equation}
f_\theta(x)=1+3\theta x_1x_2,
\qquad |\theta|\le\frac16.
\label{eq:lower-family-app}
\end{equation}
This is a density relative to $\mu$ because it integrates to one and is bounded below by $1/2$.  Its coordinate marginals are uniform.  Moreover,
\[
\E_{f_\theta}(X_1X_2)
=3\theta\E_\mu X_1^2\E_\mu X_2^2=\frac\theta3.
\]
Thus the only changing entries of $M(f_\theta)-M(f_0)$ are
$(1,2)$ and $(2,1)$, each equal to $\theta/\sqrt2$.  The jump has rank two and Frobenius norm $|\theta|$.

For $|u|\le1/2$, $(1+u)\log(1+u)\le u+C u^2$.  Taking
$u=3\theta X_1X_2$ and using $\E_\mu u=0$ gives
\begin{equation}
\operatorname{KL}(f_\theta,f_0)
=\int f_\theta\log f_\theta\,d\mu
\le C\theta^2.
\label{eq:kl-density-app}
\end{equation}
Choose two sequence models with change locations $\eta_0$ and
$\eta_1=\eta_0+h$, using $f_0$ before the change and $f_\theta$ after it.  Their joint laws differ in only $h$ observations, so
\[
\operatorname{KL}(P_0,P_1)\le Ch\theta^2.
\]
Take $h=\lfloor c_1\theta^{-2}\rfloor$ with $c_1$ small enough that this KL divergence is at most $1/8$, and place both locations at least $h$ from the endpoints.  Pinsker's inequality gives
$\operatorname{TV}(P_0,P_1)\le1/4$.  The two-point Le Cam bound then yields, for every estimator,
\begin{align*}
\max_{j\in\{0,1\}}\E_{P_j}|\widehat\eta-\eta_j|
&\ge\frac h2\inf_\phi
\{P_0(\phi=1)+P_1(\phi=0)\}\\
&\ge\frac h2(1-\operatorname{TV}(P_0,P_1))
\ge c h\ge\frac{c'}{\theta^2}.
\end{align*}
Since the projected jump size is $\kappa=|\theta|$, the asserted lower bound follows.
\end{proof}

\section{Uniform geometric \texorpdfstring{$\beta$}{beta}-mixing extension}
\label{app:mixing}

The population calculations use only linearity of expectation and are unchanged under temporal dependence.  Dependence changes only the control of the centered weighted sum.

Let $Z_i=H_2(X_i)-\E H_2(X_i)$ and suppose the triangular array is uniformly absolutely regular:
\begin{equation}
\sup_n\sup_{1\le j\le n-h}
\beta\bigl(\sigma(Z_i:i\le j),\sigma(Z_i:i\ge j+h)\bigr)
\le e^{-c_\beta(h-1)},\qquad h\ge1,
\label{eq:beta-mixing-app}
\end{equation}
with $\|Z_i\|_{\op}\le B_d$ almost surely.  For a fixed triple, put
$A_j=w_{s+j}^{s,e}(t)Z_{s+j}$, $j=1,\ldots,N$.  Deterministic measurable transformations cannot increase absolute-regularity coefficients, so the sequence $A_j$ satisfies \eqref{eq:beta-mixing-app}.  Also
\[
\lambda_{\max}(A_j)\le B_d w_{\max}.
\]
Define the dependent variance proxy
\begin{equation}
v_{s,e,t}^2
=\sup_{\varnothing\ne J\subseteq\{1,\ldots,N\}}
\frac1{|J|}\lambda_{\max}\E\left(\sum_{j\in J}A_j\right)^2.
\label{eq:mix-variance-proxy-app}
\end{equation}
It retains the lagged cross-moments that vanish under independence.

A direct two-sided consequence of Theorem 1 of \citet{banna2016bernstein} is that, for a universal $C>0$,
\begin{equation}
\Pp\{\|\ECU^{s,e}(t)\|_{\op}\ge x\}
\le2(d+1)\exp\left[
-\frac{Cx^2}{Nv_{s,e,t}^2+c_\beta^{-1}B_d^2w_{\max}^2
+xB_dw_{\max}\gamma(c_\beta,N)}
\right],
\label{eq:BMY-cusum-app}
\end{equation}
where
\begin{equation}
\gamma(c,N)=\frac{\log N}{\log2}
\max\left\{2,\frac{32\log N}{c\log2}\right\}.
\label{eq:BMY-gamma-app}
\end{equation}
The same variance proxy appears for $-A_j$, which justifies the factor two.

For a finite scan set $\cG$, define $\lambda_m^{\mathrm{mix}}(\delta)$ as any value $x$ for which the right-hand side of \eqref{eq:BMY-cusum-app} is at most $\delta/|\cG|$ uniformly over $(s,t,e)\in\cG$.  Then
\begin{equation}
\Pp\left\{
\sup_{(s,t,e)\in\cG}\|\ECU^{s,e}(t)\|_{\op}
\le\lambda_m^{\mathrm{mix}}(\delta)
\right\}\ge1-\delta.
\label{eq:uniform-mixing-event-app}
\end{equation}
Every argument in \cref{app:low-rank,app:multiple-proof,app:preliminary-localization} is deterministic conditional on such a uniform event.  Therefore the low-rank score bound, the full multiple-change induction, and the preliminary localization result hold verbatim after replacing $\lambda_m$ by $\lambda_m^{\mathrm{mix}}$.  No global stationarity of the piecewise-distribution sequence is required for this transfer; the uniform mixing condition is imposed directly on the centered array.

\section{Implementation and experimental details}
\label{app:experiments}

\subsection{Feature computation and scan complexity}
The code never constructs a density estimate.  For a block $X\in\R^{q\times d}$, it forms the lower-right part of $\sum_iH_2(X_i)$ through a matrix multiplication plus a diagonal correction.  This costs $O(qd^2)$ arithmetic.  A naive full prefix array costs $O(nd^2)$ memory.  The multiple-change implementation instead stores prefix matrices only at interval endpoints and coarse candidate locations, reducing memory to $O(|\mathcal P|d^2)$ for a selected index set $\mathcal P$.

A dense symmetric eigendecomposition costs $O(d^3)$ per scanned candidate and is used in the released reference code for numerical robustness at $d\le200$.  An implementation needing only the largest $r$ eigenvalues in magnitude can use a Lanczos-type partial eigensolver, reducing the dominant per-candidate spectral cost to approximately $O(rd^2)$ matrix-vector work.  In the large multiple-change experiment, each interval is first scanned on at most 80 coarse candidates, and only selected intervals receive an exact local rescan.  The theory is stated for the full central candidate grid; the coarse-to-fine implementation is a computational approximation.

All reported timings are single-process wall-clock measurements in Python/NumPy on an Intel Xeon Platinum 8573C virtual machine.  The environment used Python 3.13, NumPy 2.3, SciPy 1.17, pandas 2.2, and Matplotlib 3.10.

\subsection{Single-change Gaussian-copula experiment}
For correlation $\rho$, generate a bivariate standard normal pair $(G_1,G_2)$ with correlation $\rho$ and set
\[
X_1=2\Phi(G_1)-1,\qquad X_2=2\Phi(G_2)-1.
\]
All remaining coordinates are independent $\operatorname{Unif}[-1,1]$.  The marginals are uniform for every $\rho$, so all coordinate means and all marginal quadratic coefficients are unchanged.  Only the $(1,2)$ interaction changes.  Its projected jump has rank two and Frobenius norm equal to the Spearman correlation
\[
\kappa=\rho_S=\frac6\pi\arcsin(\rho/2)
\]
when the correlation changes from $0$ to $\rho$.  We use $\rho=0.85$, a change at $n/2$, $n=60d$, and 50 independent replications.

The odd observations are used for the preliminary direction scan and the even observations for local refinement.  Candidate scans use at most 250 evenly spaced coarse points (200 at $d=200$), followed by an exact scan in the neighboring coarse cell.  The held-out scalar least-squares objective uses the first and last 20\% of the fold as anchors.  The oracle baseline knows the affected scalar feature $3X_1X_2$.  The Gaussian-copula density is not uniformly bounded at all corners for nonzero $\rho$; this experiment is therefore a stress test beyond the simple bounded-density sufficient condition used for \cref{thm:uniform-main}.

\begin{table}[ht]
\centering
\caption{Complete single-change summary over 50 replications.  $P_{\le4}$ is the proportion with absolute error at most four and $P_0$ is exact recovery.}
\label{tab:single-app}
\scriptsize
\setlength{\tabcolsep}{3.2pt}
\begin{tabular}{rrlrrrr}
\toprule
$d$ & $n$ & Method & Median & Mean (SD) & $P_{\le4}$ & $P_0$\\
\midrule
20&1200&Full D2&19&44.44 (79.17)&0.24&0.10\\
20&1200&\LRD{} preliminary&6&38.80 (100.13)&0.46&0.20\\
20&1200&\LRD{} refined&11&31.68 (66.57)&0.40&0.18\\
20&1200&Mean CUSUM&419&355.08 (131.02)&0.00&0.00\\
20&1200&Oracle&4&9.68 (12.78)&0.56&0.28\\
\addlinespace
50&3000&Full D2&16&76.76 (167.82)&0.26&0.08\\
50&3000&\LRD{} preliminary&6&16.36 (25.27)&0.46&0.18\\
50&3000&\LRD{} refined&10&14.84 (23.77)&0.32&0.16\\
50&3000&Mean CUSUM&684&660.64 (344.87)&0.00&0.00\\
50&3000&Oracle&4&6.56 (8.66)&0.58&0.30\\
\addlinespace
100&6000&Full D2&41&83.24 (110.01)&0.16&0.04\\
100&6000&\LRD{} preliminary&7&60.36 (330.35)&0.46&0.12\\
100&6000&\LRD{} refined&8&16.68 (29.91)&0.36&0.10\\
100&6000&Mean CUSUM&1416&1352.52 (679.96)&0.00&0.00\\
100&6000&Oracle&6&7.80 (8.76)&0.46&0.20\\
\addlinespace
200&12000&Full D2&96&170.20 (223.53)&0.02&0.02\\
200&12000&\LRD{} preliminary&4&14.84 (29.86)&0.52&0.22\\
200&12000&\LRD{} refined&4&10.96 (16.91)&0.56&0.24\\
200&12000&Mean CUSUM&3166&2753.32 (1609.97)&0.00&0.00\\
200&12000&Oracle&4&6.60 (10.13)&0.58&0.30\\
\bottomrule
\end{tabular}
\end{table}

The large standard deviations for a few preliminary settings reflect rare pilot scans that lock onto a distant noise peak.  The cross-fitted refinement substantially reduces this tail at $d=100$ and $d=200$; at the smallest two dimensions, its independent held-out noise can slightly increase the median even while reducing some large errors.  This is why the paper reports both stages rather than asserting uniform finite-sample improvement.

\subsection{Multiple-change experiment}
We use $d=100$, $n=24000$, change points $(6000,12000,18000)$, and copula-correlation states
$(0.95,-0.95,0.95,-0.95)$.  Each projected jump is rank two.  The training-fold minimum interval length is 700.  The threshold 30 is a fixed reproducibility choice rounded above the maxima of six independent null scans:
\[
29.076,\ 28.607,\ 29.517,\ 28.672,\ 28.850,\ 29.776.
\]
This small calibration is not presented as a formal finite-sample level guarantee; the theorem instead supplies an analytic separation condition, and a larger null bootstrap can be used in applications.

\begin{table}[ht]
\centering
\caption{Multiple-change summary over 12 independent replications.  Errors are Hausdorff distances in original time units.}
\label{tab:multiple-app}
\small
\begin{tabular}{rrrrrrrr}
\toprule
$d$ & $n$ & $K$ & $P(\widehat K=K)$ & Med. pre. & Mean pre. & Med. ref. & Mean ref.\\
\midrule
100&24000&3&1.00&2.0&3.5&1.0&1.67\\
\bottomrule
\end{tabular}
\end{table}
All 12 refined outputs place every estimated change within 10 observations of its matched truth.  Mean end-to-end runtime is 8.93 seconds.

\subsection{UCI Human Activity Recognition experiment}
The public UCI HAR data contain 561 precomputed smartphone features in $[-1,1]$ \citep{anguita2013har}.  We concatenate the official train and test partitions, retain activity labels 1 (WALKING) and 3 (WALKING\_DOWNSTAIRS), and choose the 128 coordinates of largest pooled variance.  In every replication, 400 rows are sampled with replacement from each activity.  Each row is multiplied by an independent Rademacher sign.  The population first moment of the sign-symmetrized distribution is exactly zero, while every degree-two product is unchanged because the sign squares to one.  Hence the benchmark isolates a non-mean distribution shift.

\begin{table}[ht]
\centering
\caption{UCI-HAR localization over 50 randomized replications, $d=128$ and $n=800$.}
\label{tab:har-app}
\small
\begin{tabular}{lrrrr}
\toprule
Method & Median & Mean (SD) & $P_0$ & $P_{\le4}$\\
\midrule
Full D2&0&1.12 (2.29)&0.68&0.94\\
\LRD{} preliminary&0&1.12 (2.29)&0.68&0.94\\
\LRD{} refined&0&1.08 (2.11)&0.72&0.94\\
Mean CUSUM&256&221.56 (91.34)&0.00&0.00\\
\bottomrule
\end{tabular}
\end{table}

\subsection{Reproduction commands}
The supplementary source contains \texttt{code/simulations.py} and
\texttt{code/har\_experiment.py}.  From the source root, regenerate the synthetic experiments with
\begin{quote}
\footnotesize\ttfamily
python code/simulations.py --out results\\
\hspace*{1em}--single-reps 50 --multiple-reps 12 --seed 250813
\end{quote}
For UCI HAR, first download the public archive, concatenate the official train and test feature/label files, and run
\begin{quote}
\footnotesize\ttfamily
python code/har\_experiment.py --x X\_all.txt --y y\_all.txt\\
\hspace*{1em}--out results --reps 50 --dimension 128 --segment-size 400
\end{quote}
The scripts write raw replication-level CSV files, summaries, and all figures used in the manuscript.

\end{document}